\newtheorem{theorem*}{Theorem}
\newif\ifarxiv
\newcommand{\cmark}{\ding{51}}
\newcommand{\xmark}{\ding{55}}
\newcommand{\printfnsymbol}[1]{%
  \textsuperscript{\@fnsymbol{#1}}%
}
\begin{document}

\title{Deep Expander Networks:\\Efficient Deep Networks from Graph Theory} 

\titlerunning{Deep Expander Networks}
\author{Ameya Prabhu\thanks{ indicates these authors contributed equally to this work.} \qquad
Girish Varma\printfnsymbol{1} \qquad
Anoop Namboodiri}

\authorrunning{A. Prabhu, G. Varma and A. Namboodiri}

\institute{Center for Visual Information Technology\\
Kohli Center on Intelligent Systems, 
IIIT Hyderabad, India \\
\email{ameya.pandurang.prabhu@gmail.com, \{girish.varma, anoop\}@iiit.ac.in}\\
\url{https://github.com/DrImpossible/Deep-Expander-Networks}}

\maketitle

\begin{abstract}
Efficient CNN designs like ResNets and DenseNet were proposed to improve accuracy vs efficiency trade-offs. They essentially increased the connectivity, allowing efficient information flow across layers. Inspired by these techniques, we propose to model connections between filters of a CNN using graphs which are simultaneously sparse and well connected. Sparsity results in efficiency while well connectedness can preserve the expressive power of the CNNs. We use a well-studied class of graphs from theoretical computer science that satisfies these properties known as Expander graphs. Expander graphs are used to model connections between filters in CNNs to design networks called X-Nets. We present two guarantees on the connectivity of X-Nets: Each node influences every node in a layer in logarithmic steps, and the number of paths between two sets of nodes is proportional to the product of their sizes. We also propose efficient training and inference algorithms, making it possible to train deeper and wider X-Nets effectively.\\

Expander based models give a $4\%$ improvement in accuracy on MobileNet over grouped convolutions, a popular technique, which has the same sparsity but worse connectivity. X-Nets give better performance trade-offs than the original ResNet and DenseNet-BC architectures. We achieve model sizes comparable to state-of-the-art pruning techniques using our simple architecture design, without any pruning. We hope that this work motivates other approaches to utilize results from graph theory to develop efficient network architectures.
\end{abstract}

\section{Introduction}
\begin{figure}[t]
\centering
\includegraphics[scale=0.175]{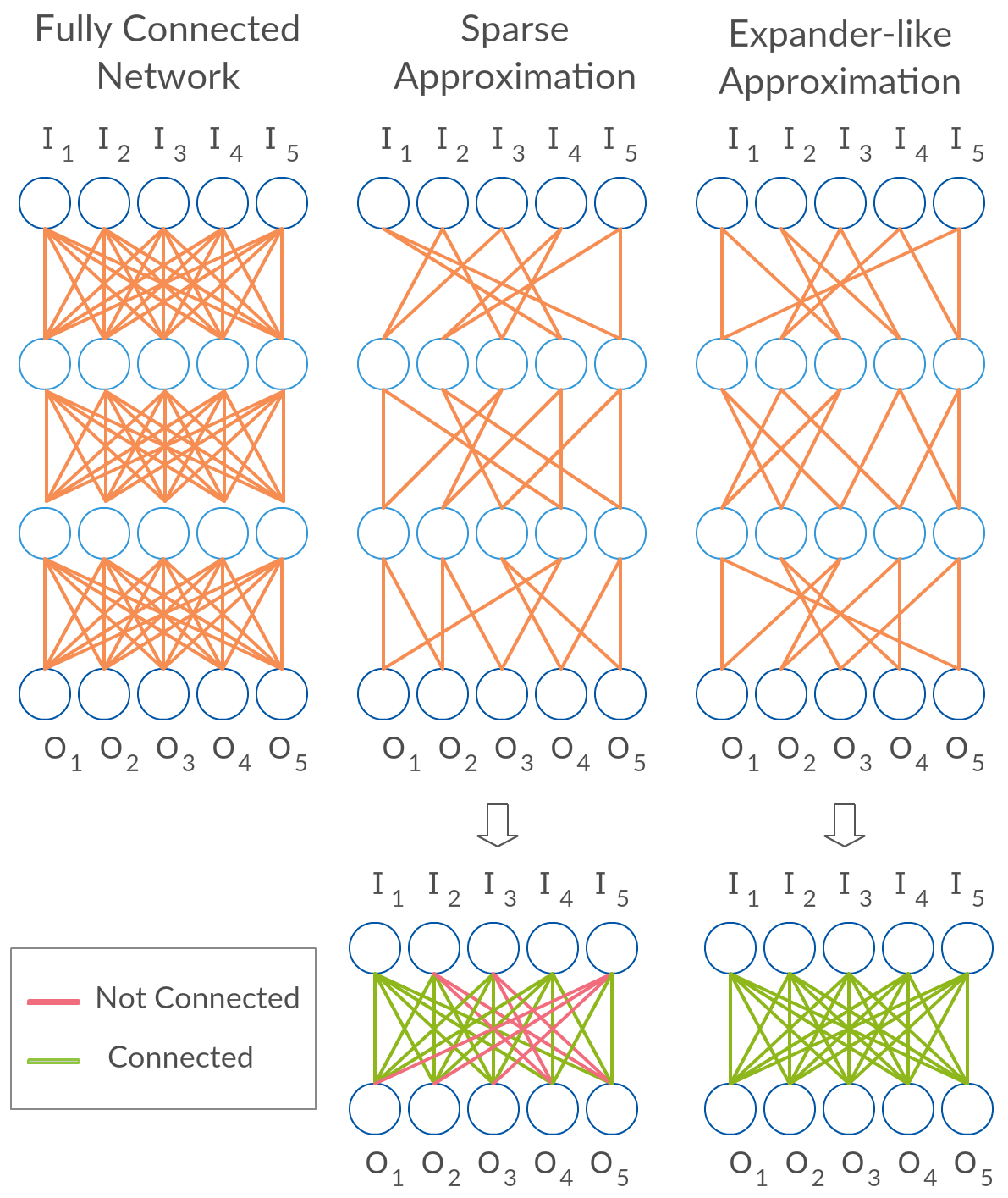}

\caption{Popular sparse approximations are agnostic to the global information flow in a network, possibly creating disconnected components. In contrast, expander graph-based models produce sparse yet highly connected networks.}
\label{fig:intro}

\end{figure}

Convolutional Neural Networks (CNNs) achieve state-of-the-art results in a variety of machine learning applications\cite{krizhevsky2012imagenet,he2016identity,huang2016densely,googlenet}. However, they are also computationally intensive and consume a large amount of computing power and runtime memory. After the success of VGG Networks \cite{vgg}, there has been significant interest in designing compact neural network architectures due to the wide range of applications valuing mobile and embedded devices based use cases.
 
ResNet\cite{resnet} and DenseNet-BC\cite{huang2016densely} directed the focus of efficient designs of convolutional layers on increasing connectivity. Additional connectivity with residual connections to previous layers provided efficient information flow through the network, enabling them to achieve an order of magnitude reduction in storage and computational requirements. We take inspiration from these approaches, to focus on designing highly connected networks. We explore making networks efficient by designing sparse networks that preserve connectivity properties. 
Recent architectures like MobileNet\cite{howard2017mobilenets} improves the efficiency by an order of magnitude over a ResNet. However, in order to achieve this, they sparsify a network by removing several connections from a trained network, reducing their accuracies in the process. We ask a basic question: If we try to maximize the connectivity properties and information flow, can we achieve the same efficiency gains with minimal loss in accuracy? 
It is essential that the connections allow information to flow through the network easily. That is, each output node must at least have the capacity to be sensitive to features of previous layers. As we can see from Fig.\ref{fig:intro}, traditional model compression techniques such as pruning can aggravate the problem, since they can prune the neuron connections of a layer, while being agnostic of global connectivity of the network. A necessary condition for having good representational power is efficient information flow through the network, which is particularly suited to be modeled by graphs. 
We propose to make the connections between neurons (filters in the case of CNNs) according to specific graph constructions known as expander graphs. They have been widely studied in spectral graph theory \cite{sgt} and pseudorandomness \cite{pseudo}, and are known to be sparse but highly connected graphs. Expander graphs have a long history in theoretical computer science, also being used in practice in computer networks, constructing error correcting codes, and in cryptography (for a survey, see \cite{expsurvey}).

{\bf Main Contributions:} i.) We propose to represent neuronal connections in deep networks using expander graphs (see Section \ref{sec:approach}). We further prove that X-Nets have strong connectivity properties (see Theorem \ref{thm:conn}). ii.) We provide memory-efficient implementations of  Convolutional (X-Conv) layers using sparse matrices and propose a fast expander-specific algorithm (see Section \ref{sec:implementation}). iii.) We empirically compare X-Conv layers with grouped convolutions that have the same level of sparsity but worse connectivity. X-Conv layers obtain a 4\% improvement in accuracy when both the techniques are applied to the MobileNet architecture trained on Imagenet (see Section \ref{sec:group}).  iv.) We also demonstrate the robustness of our approach by applying the technique to some of the state of the art models like DenseNet-BC and ResNet, obtaining better performance trade-offs (see Section \ref{sec:denres}). v.) Additionally, our simple design achieves comparable compression rates to even the state-of-the-art trained pruning techniques. (see Section \ref{sec:prun}). vi.) Since we enforce the sparsity before the training phase itself, our models are inherently compact and faster to train compared to pruning techniques. We leverage this and showcase the performance of wider and deeper X-Nets (see Section \ref{sec:ultrawide}). 

\section{Related Work}\label{sec:rel-works}
Our approach lies at the intersection of trained pruning techniques and efficient layer design techniques. We present a literature survey regarding both the directions in detail. 

\subsection{Efficient Layer Designs}

Currently there is extensive interest in developing novel convolutional layers/blocks and effectively leveraging them to improve architectures like \cite{squeezenet,howard2017mobilenets,hu2017squeeze}. Such micro-architecture design is in a similar direction as our work. In contrast, approaches like \cite{googlenet} try to design the macro-architectures by connecting pre-existing blocks. Recent concurrent work has been on performing architecture searches effectively \cite{liu2017hierarchical,zoph2017learning,zhong2017practical,liu2017progressive}. Our work is complementary to architecture search techniques as we can leverage their optimized macro-architectures. 

Another line of efficient architecture design is Grouped Convolutions: which was first proposed in AlexNet\cite{krizhevsky2012imagenet}, recently popularized by MobileNets\cite{howard2017mobilenets} and XCeption\cite{xception} architectures . This is currently a very active area of current research, with a lot of new concurrent work being proposed \cite{zhang2017shufflenet,sandler2018inverted,huang2017condensenet}. 

It is interesting to note that recent breakthroughs in designing accurate deep networks \cite{resnet,huang2016densely,xie2017aggregated} were mainly by introducing additional connectivity to enable the efficient flow of information through deep networks. This enables the training of compact, accurate deep networks. These approaches, along with Grouped Convolutions are closely related to our approach. 
\subsection{Network Compression} 
Several methods have been introduced to compress pre-trained networks as well as train-time compression. Models typically range from low-rank decomposition  \cite{sainath2013low,novikov2015tensorizing,masana2017domain} to network pruning  \cite{blundell2015weight,liu2015sparse,he2017channel,molchanov2016pruning}.

There is also a major body of work that quantizes the networks at train-time to achieve efficiency \cite{rastegari2016xnor,courbariaux2016bnn,han2015deep,wu2016quantized,bagherinezhad2016lcnn,zhu2017trained,zhou2016dorefa}. The problem of pruning weights in train-time have been extensively explored  \cite{wen2016structured,li2016pruning} primarily from weight-level \cite{lebedev2016fast,scardapane2017group,srinivas2015data,guo2016dynamic} to channel-level pruning \cite{liu2017learning,li2016pruning,wen2016structured}. Weight-level pruning has the highest compression rate while channel-level pruning is easier to practically exploit and has compression rates almost on par with the former. Hence, channel-level pruning is currently considered superior \cite{liu2017learning}. Channel-level pruning approaches started out with no guidance for sparsity \cite{chen2015compressing} and eventually added constraints \cite{srinivas2017training,zhou2016less,yoon2017combined}, tending towards more structured pruning.

However, to the best of our knowledge, this is the first attempt at constraining neural network connections by graph-theoretic approaches to improve deep network architecture designs. Note that we do not prune weights during training.  

\section{Approach}\label{sec:approach}
Recent breakthroughs in CNN architectures like ResNet\cite{he2015convolutional} and DenseNet-BC\cite{huang2016densely} are ideas  based on increasing connectivity, which resulted in better performance trade-offs. These works suggest that connectivity is an important property for improving the performance of deep CNNs. In that vein, we investigate ways of preserving connectivity between neurons while significantly sparsifying the connections between them. Such networks are expected to preserve accuracy (due to connectivity) while being runtime efficient (due to the sparsity). We empirically demonstrate this in the later sections.

\subsection{Graphs and Deep CNNs}

We model the connections between neurons as graphs. This enables us to leverage well-studied concepts from Graph Theory like Expander Graphs. Now, we proceed to formally describe the connection between graphs and Deep CNNs. 

{\bf Linear Layer defined by a Graph:} Given a bipartite graph $G$ with vertices $U, V$, the Linear layer defined by $G$, is a layer with $|U|$ input neurons, $|V|$ output neurons and each output neuron $v \in V$ is only connected to the neighbors given by $G$. Let the graph $G$ be sparse, having only $M$ edges. Then this layer has only $M$ parameters as compared to $|V|\times |U|$, which is the size of typical linear layers. 
{\bf Convolutional Layer defined by a Graph:} Let a Convolutional layer be defined as a bipartite graph $G$ with vertices $U,V$ and a window size of $c\times c$. This layer takes a 3D input with $|U|$ channels and produces a 3D output with $|V|$ channels. The output channel corresponding to a vertex $v \in V$ is computed only using the input channels corresponding the the neighbors of $v$. Let $G$ be sparse, having only $M$ edges. Hence the kernel of this convolutional layer has $M \times c \times c$ parameters as compared to $|V|\times |U| \times c \times c$, which is the number of parameters in a vanilla CNN layer.

\subsection{Sparse Random Graphs}

We want to constrain  our convolutional layers to form a sparse graph $G$. Without any prior knowledge of the data distribution, we take inspiration from randomized algorithms and propose choosing the neighbours of every output neuron/channel uniformly and independently at random from the set of all its input channels. It is known that a graph $G$ obtained in this way belongs to a well-studied category of graphs called Expander Graphs, known to be sparse but well connected. 

{\bf Expander Graph:} A bipartite expander with degree $D$ and spectral gap $\gamma$, is a bipartite graph $G=(U,V,E)$ ($E$ is the set of edges,  $ E \subseteq U\times V$) in which: 

{\bf1.) Sparsity:} Every vertex in $V$ has only $D$ neighbors in $U$. We  will be using constructions with $D << |U|$. Hence the number of edges is only $D \times |V|$ as compared to $|U| \times |V|$ in a dense graph.

 {\bf 2.) Spectral Gap:} The eigenvalue with  the second largest absolute value $\lambda$ of the adjacency matrix is bounded away from D (the largest eigenvalue). Formally $1-\lambda/D \geq \gamma.$

{\bf Random expanders:} A random bipartite expander of degree $D$ on the two vertex sets $U, V$, is a graph in which for every vertex $v \in V$, the $D$ neighbors are chosen independently and uniformly from $U$. It is a well-known result in graph theory that such graphs have a large spectral gap (\cite{pseudo}).
Similar to random expanders, there exist several explicit expander constructions. More details about explicit expanders can be found in the supplementary section.
We now proceed to give constructions of deep networks that have connections defined by an expander graph.

{\bf Expander Linear Layer (X-Linear):} The Expander Linear (X-Linear) layer is a layer defined by a random bipartite expander $G$ with degree $D$. The expander graphs that we use have values of $D << |U|$, while having an expansion factor of $K \approx D$, which ensures that the layer still has good expressive power.

{\bf Expander Convolutional Layer (X-Conv):} The Expander Convolutional (X-Conv) layer is a convolutional layer defined by a random bipartite expander graph $G$ with degree $D$, where $D << |U|$. 

{\bf Deep Expander Networks (X-Nets):} Given expander graphs 
$$G_1 = (V_0,V_1,E_1), G_2 = (V_1,V_2,E_2), \cdots, G_t = (V_{t-1},V_t,E_t)$$, we define the Deep Expander Convolutional Network (Convolutional X-Net or simply X-Net) as a $t$ layer deep network in which the convolutional layers are replaced by X-Conv layers and linear layers are replaced by X-Linear layers defined by the corresponding graphs. 

\begin{figure*}[t]
\centering
\includegraphics[scale=0.15]{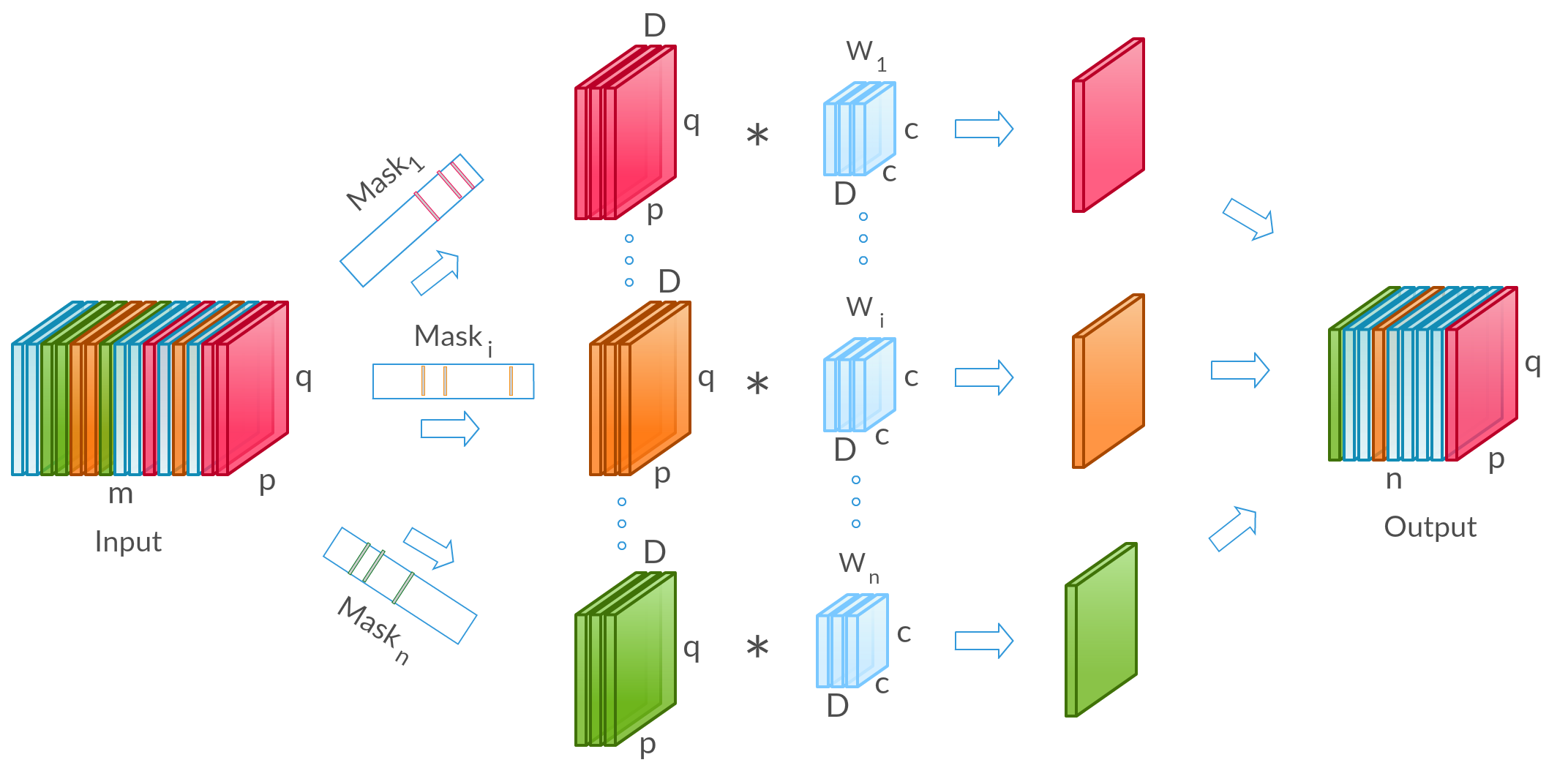}

\caption{The proposed fast convolution algorithm for X-Conv layer. We represent all the non-zero filters in the weight matrix of the X-Conv layer as a compressed dense matrix of $D$ channels. The algorithm starts by selecting $D$ channels from input (with replacement) using a mask created while initializing the model. The output is computed by convolving these selected channels with the compressed weight matrices.}
\label{fig:efficientmatrix}

\end{figure*}

\subsection{Measures of Connectivity}

In this subsection, we describe some connectivity properties of Expander graphs (see \cite{pseudo}, for the proofs). These will be used to prove the properties of sensitivity and mixing of random walks in X-Nets .

{\bf Expansion:} For every subset $S \subseteq V$ of size $\leq \alpha |V|$ ($\alpha \in (0,1)$ depends on the construction), let $N(S)$ be the set of neighbors. Then $|N(S)| \geq K |S|$ for $K\approx D$. That is, the neighbors of the vertices in $S$ are almost distinct. It is known that random expanders have expansion factor $K \approx D$ (see Theorem 4.4 in \cite{pseudo}).

{\bf Small Diameter:} The diameter of a graph is the length of the longest path among all shortest paths. If $G(U,V,E)$ is a $D$-regular expander with expansion factor $K > 1$ and diameter $d$, then  $d\leq O(\log n )$. This bound on the diameter implies that for any pair of vertices, there is a path of length $O(\log n)$ in the graph.

{\bf Mixing of Random Walks:} Random walks in the graph quickly converge to the uniform distribution over nodes of the graph. If we start from any vertex and keep moving to a random neighbor, in $O(\log n)$ steps the distribution will be close to uniform over the set of vertices.

\subsection{Sensitivity of X-Nets}

X-Nets have multiple layers, each of which have connections derived from an expander graph. We can guarantee that the output nodes in such a network are sensitive to all the input nodes.

\begin{theorem}[Sensitivity of X-Nets]\label{thm:conn}
Let $n$ be the number of input as well as output nodes in the network and $G_1,G_2,\cdots, G_t$ be $D$ regular bipartite expander graphs with $n$ nodes on both sides. Then  
every output neuron is sensitive to every input in a Deep X-Net defined by $G_i$'s with depth $t = O( \log n)$.
\end{theorem}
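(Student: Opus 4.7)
The plan is to interpret ``output $v$ is sensitive to input $u$'' as the existence of a directed path from $u$ to $v$ in the layered DAG obtained by stacking $G_1, G_2, \ldots, G_t$, and to show that after $t = O(\log n)$ layers the set of output nodes reachable from any fixed input $u$ exhausts $V_t$. Since the argument will not depend on the choice of $u$, this is enough to establish sensitivity for every input-output pair simultaneously.

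Fix $u \in V_0$ and let $S_i \subseteq V_i$ denote the set of nodes reachable from $u$ in $i$ layers, so $S_0 = \{u\}$ and $S_{i+1} = N_{G_{i+1}}(S_i)$. The goal becomes $S_t = V_t$. I would argue this in two phases. In the \emph{growth phase}, as long as $|S_i| \leq \alpha n$, the expansion property of $G_{i+1}$ yields $|S_{i+1}| \geq K |S_i|$ with $K \approx D$, so $|S_i|$ grows geometrically and crosses $\alpha n$ after $O(\log_K(\alpha n)) = O(\log n)$ layers. In the \emph{shrinkage phase}, consider the complement $T_i = V_i \setminus S_i$; any vertex $v \in T_{i+1}$ has all of its $D$ neighbors in $V_i$ contained in $T_i$, since otherwise $v$ would be reachable from $u$. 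Because each $G_i$ is $D$-regular on both sides, the expansion property applies to the reverse bipartition too, giving $|N(T_{i+1})| \geq K|T_{i+1}|$ whenever $|T_{i+1}| \leq \alpha n$. Combined with $N(T_{i+1}) \subseteq T_i$, this gives $|T_{i+1}| \leq |T_i|/K$, so $|T_i|$ shrinks geometrically to zero in another $O(\log n)$ layers.

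The main obstacle is the \emph{bridge regime}, where $\alpha n < |S_i| < (1-\alpha)n$, because neither the small-set expansion bound on $S_i$ nor the small-complement bound on $T_i$ applies directly. I would handle this by choosing $\alpha$ close to $1/2$ so that the two regimes almost overlap, and checking that a single layer with expansion factor $K$ takes a set of size slightly above $\alpha n$ to one of size above $(1-\alpha) n$ (relying on $K$ being reasonably large). If that clean hand-off fails, I would fall back on the spectral-gap characterization of the expanders and invoke the Expander Mixing Lemma to show that any set of intermediate size has its neighborhood covering all but a tiny fraction of the next layer, so the bridge is crossed in a constant number of extra layers. Summing the depths of the three regimes then gives the required $t = O(\log n)$, completing the proof.
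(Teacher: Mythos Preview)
Your approach is viable but differs from the paper's, and it is more complicated than necessary. The paper argues by a \emph{meet-in-the-middle} strategy: it expands forward from the input $u$ through $G_1,G_2,\ldots$ and simultaneously backward from the output $v$ through $G_t,G_{t-1},\ldots$, in each direction using the spectral form of expansion (Lemma~\ref{lab:spec}: every set of size $\leq n/2$ grows by a factor $(1+\gamma)$). After $O(\log n)$ layers in each direction, both the forward-reachable set from $u$ and the backward-reachable set from $v$ exceed $n/2$, so by pigeonhole they intersect at some common layer, yielding a $u$--$v$ path.

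Compared to your plan, the paper never needs to push the reachable set all the way to $n$, so there is no ``bridge regime'' and no complement-shrinkage argument. The spectral expansion bound, valid for all sets up to size $n/2$, is what makes the two halves of the argument dovetail with no gap; your use of the combinatorial expansion property (valid only up to $\alpha n$) is what creates the bridge you then have to patch. On the other hand, your argument, once completed, proves something slightly stronger---that $S_t = V_t$, i.e.\ every output is reachable from $u$---rather than merely that one specified $v$ is reachable; and it runs entirely in one direction, so it does not need to invoke expansion of the reversed bipartitions (though the paper's backward step implicitly relies on the same spectral symmetry).
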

\begin{proof}
For every pair of input and output $(u,v)$, we show that there is a path in the X-Net. The proof is essentially related to the the fact that expander graphs have diameter $O(\log n)$. A detailed proof can be found in the supplementary material.
\end{proof}

Next, we show a much stronger connectivity property known as mixing for the X-Nets. The theorem essentially says that the number of edges between subsets of input and output nodes is proportional to the product of their sizes. This result implies that the connectivity properties are uniform and rich across all nodes as well as subsets of nodes of the same size. Simply put, all nodes tend to have equally rich representational power. 

\begin{theorem}[Mixing in X-Nets]
Let $n$ be the number of input as well as output nodes in the network and $G$ be $D$ regular bipartite expander graph with $n$ nodes on both sides. Let $S,T$ be subsets of input and output nodes in the X-Net layer defined by $G$. The number of edges between $S$ and $T$ is $\approx D|S||T|/n$
\end{theorem}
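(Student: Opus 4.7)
The statement is the classical Expander Mixing Lemma transplanted to the bipartite setting, so my plan is to give a spectral proof following the standard template. Let $A \in \{0,1\}^{n \times n}$ be the bipartite biadjacency matrix of $G$, so that $A_{uv} = 1$ exactly when $(u,v) \in E$. Then the number of edges between $S \subseteq U$ and $T \subseteq V$ can be written as the bilinear form $e(S,T) = \mathbf{1}_S^\top A\, \mathbf{1}_T$, where $\mathbf{1}_S, \mathbf{1}_T$ are the $\{0,1\}$-indicator vectors. This rewriting is the first step.

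Next I would decompose each indicator vector along the top singular direction of $A$ and its orthogonal complement. Since $G$ is $D$-regular on both sides, the all-ones vector $\mathbf{1}$ is a singular vector of $A$ with singular value $D$; write $\mathbf{1}_S = \tfrac{|S|}{n}\mathbf{1} + \mathbf{1}_S^{\perp}$ and $\mathbf{1}_T = \tfrac{|T|}{n}\mathbf{1} + \mathbf{1}_T^{\perp}$, where the perpendicular parts are orthogonal to $\mathbf{1}$. Substituting into the bilinear form and using $A\mathbf{1} = D\mathbf{1}$ gives the main term $D|S||T|/n$ from the parallel-parallel component; the parallel-perpendicular cross terms vanish because $\mathbf{1}^\top \mathbf{1}_T^{\perp} = \mathbf{1}^\top \mathbf{1}_S^{\perp} = 0$. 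This is the main computation.

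What remains is bounding the perpendicular-perpendicular contribution $(\mathbf{1}_S^{\perp})^\top A\, \mathbf{1}_T^{\perp}$. Here I invoke the spectral-gap hypothesis: by the definition given in the excerpt, the second-largest singular value satisfies $\lambda \le D(1-\gamma)$, so the operator norm of $A$ restricted to $\mathbf{1}^{\perp}$ is at most $\lambda$. Combined with Cauchy--Schwarz, this yields
\[
\bigl|(\mathbf{1}_S^{\perp})^\top A\, \mathbf{1}_T^{\perp}\bigr| \;\le\; \lambda\,\|\mathbf{1}_S^{\perp}\|\,\|\mathbf{1}_T^{\perp}\| \;\le\; \lambda\sqrt{|S|\,|T|},
\]
using the Pythagorean identity $\|\mathbf{1}_S^{\perp}\|^2 = |S| - |S|^2/n \le |S|$, and similarly for $T$. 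Putting the pieces together gives
\[
\Bigl| e(S,T) - \tfrac{D|S||T|}{n} \Bigr| \;\le\; \lambda\sqrt{|S||T|},
\]
which is the quantitative form of the ``$\approx D|S||T|/n$'' in the statement.

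The main obstacle is really just bookkeeping: making sure the bipartite spectral decomposition is invoked correctly (the relevant ``eigenvalues'' here are the singular values of the biadjacency matrix, equivalently the nontrivial eigenvalues of the $2n \times 2n$ symmetric adjacency matrix of the bipartite graph), and pinning down the hypothesis that the expanders constructed in Section 3.2 actually achieve $\lambda \ll D$ with high probability so that the error term $\lambda\sqrt{|S||T|}$ is dominated by the main term $D|S||T|/n$ whenever $|S|,|T|$ are not too small. Everything else is a direct application of Cauchy--Schwarz and the orthogonal decomposition, so no further ingredient beyond the spectral-gap property already assumed in the theorem's hypothesis is needed.
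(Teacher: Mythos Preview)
Your proof is correct and is precisely the standard spectral proof of the Expander Mixing Lemma; the paper's own argument (in the appendix) simply \emph{cites} that lemma as a black box (Lemma~4.15 of \cite{pseudo}) and observes that the single-layer statement is identical to it. So you have taken the same route, only unpacking the citation rather than invoking it---and indeed your error term $\lambda\sqrt{|S||T|} = D(1-\gamma)\sqrt{|S||T|}$ matches (and in fact corrects a missing factor of $D$ in) the bound quoted in the paper's Lemma~2.
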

\begin{proof}
A detailed proof is provided in the supplementary material.
\end{proof}

\section{Efficient Algorithms}\label{sec:implementation}
 
In this section, we present efficient algorithms of X-Conv layers. Our algorithms achieve speedups and save memory in the training as well as the inference phase. This enables one to experiment with significantly wider and deeper networks given  memory and runtime constraints.
We exploit the structured sparsity of expander graphs to design fast algorithms.  We propose two methods of training X-Nets, both requiring substantially less memory and computational cost than their vanilla counterparts: \\1) Using Sparse Representations \\2) Expander-Specific Fast Algorithms.

\subsection{Using Sparse Representation}

The adjacency matrices of expander graphs are highly sparse for $D << n$. Hence, we can initialize a sparse matrix with non-zero entries corresponding to the edges of the expander graphs. Unlike most pruning techniques, the sparse connections are determined before training phase, and stay fixed. Dense-Sparse convolutions are easy to implement, and are supported by most deep learning libraries. CNN libraries like Cuda-convnet\cite{cudaconvnet} support such random sparse convolution algorithms.

 \begin{algorithm}[t]
 \textbf{Algorithm 1: } Fast Algorithm for Convolutions in X-Conv Layer\\
 \begin{algorithmic}[1]
 \State For every vertex $v \in \{1,\cdots, n\}$, let $N(v,i)$ denote the $i$th neighbor of $v$ ($i \in \{1,\cdots, D\}$).
 \State Let $K_v$ be the $c\times c \times D \times 1$ sized kernel associated with the $v$th output channel.
 \State Let $O_v[x,y]$ be the output value of the $v$th channel at the position $x,y$.
 \For{$v$= 1 to n}
     \State $O_v[x,y] = K_v * \textit{Mask}_{N(v,1),\cdots N(v,D)}(I)[x,y]$.
 \EndFor
 \end{algorithmic}
 \label{alg:cnnalgo}
 \end{algorithm}

\subsection{X-Net based Fast Dense Convolution}
Next, we present fast algorithms that exploit the sparsity of expander graphs.\\ 

{\bf X-Conv:} In an X-Conv layer, every output channel is only sensitive to $out$ rom input channels. We propose to use a mask to select $D$ channels of the input, and then convolve with a $c \times c \times D \times 1$ kernel, obtaining a single channel per filter in the output. The mask is obtained by choosing D samples uniformly (without replacement) from the set $\{1,\cdots N\}$, where $N$ is the number of input channels. The mask value is $1$ for each of the selected $D$ channels and $0$ for others (see Algorithm \ref{alg:cnnalgo}). This is illustrated in Figure \ref{fig:efficientmatrix}. There has been recent work about fast CUDA implementations called Block-Sparse GPU Kernels\cite{blocksparse}, which can implement this algorithm efficiently.

\section{Experiments and Results}\label{sec:experiments}
\begin{figure}
\centering
\includegraphics[scale=0.3]{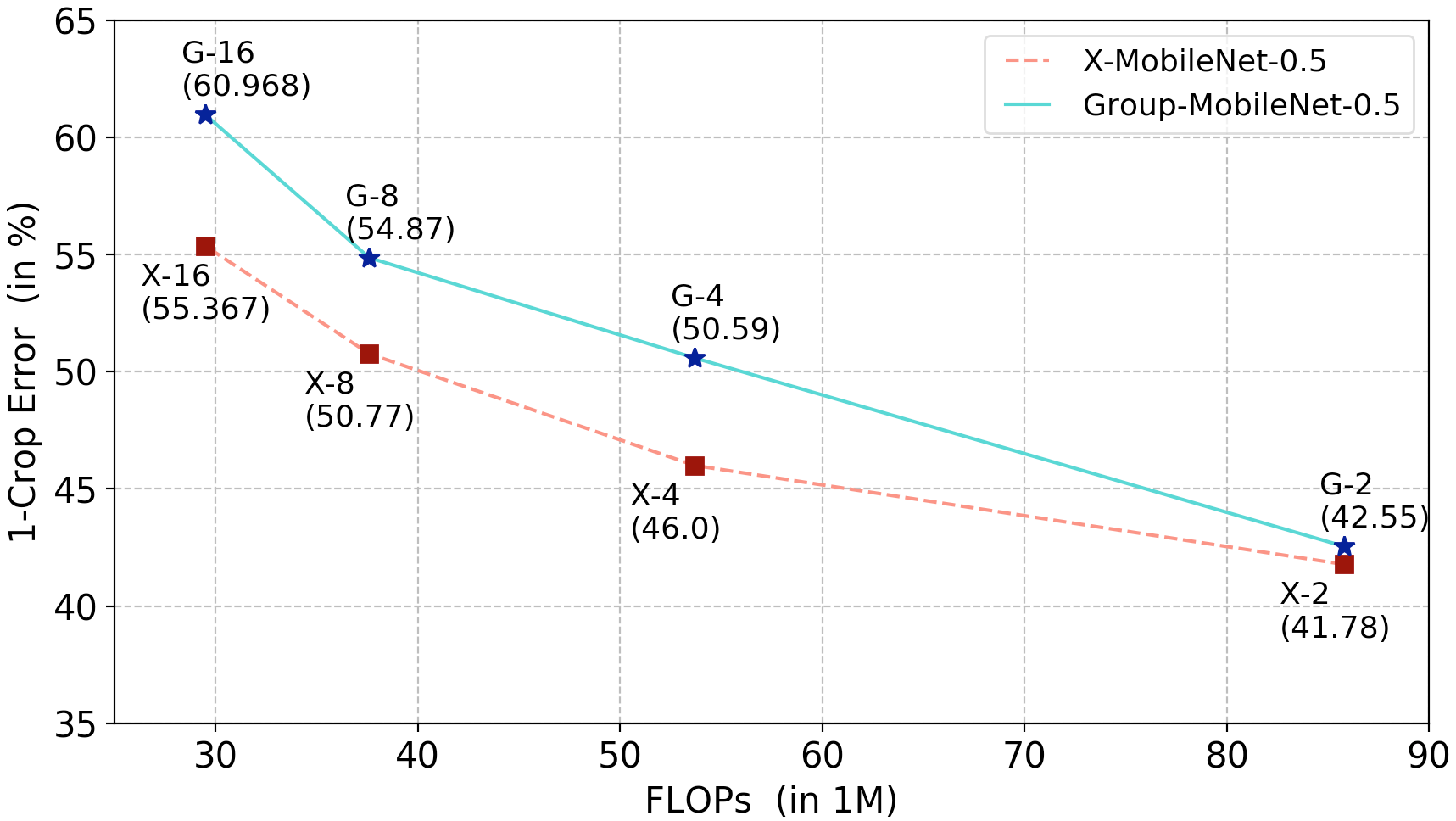}
\caption{Comparison between Grouped convolutions and X-Conv using MobileNet architecture trained on ImageNet. X-$d$ or G-$d$ represents the 1x1 conv layers are compressed by $d$ times using X-Conv or Groups. We observe X-MobileNets beat Group-MobileNet by 4\% in accuracy on increasing sparsity.}
\label{fig:mobilenet}
\end{figure}
In this section, we benchmark and empirically demonstrate the effectiveness of X-Nets on a variety of CNN architectures. Our code is available at: \url{https://github.com/DrImpossible/Deep-Expander-Networks}. 
\subsection{Comparison with Grouped Convolution}
\label{sec:group}

First, we compare our Expander Convolutions (X-Conv) against Grouped Convolutions (G-Conv). We choose G-Conv as it is a popular approach, on which a lot of concurrent works \cite{zhang2017shufflenet} have developed their ideas. G-Conv networks have the same sparsity as X-Conv networks but lack only the connectivity property. This will test whether increasing connectivity increases accuracy, i.e does a graph without good connectivity properties provides worse accuracy? We choose MobileNet as the base model for this experiment, since it is the state-of-the-art in efficient CNN architectures.
\begin{figure*}[t] 
\begin{tabular}{cc}
 \includegraphics[scale=0.28]{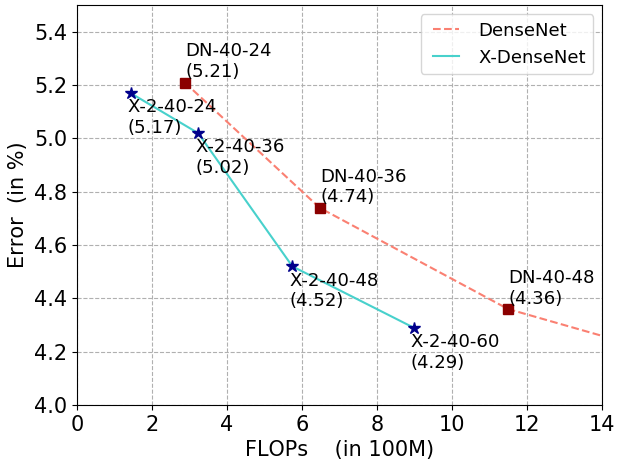}  & \includegraphics[scale=0.28] {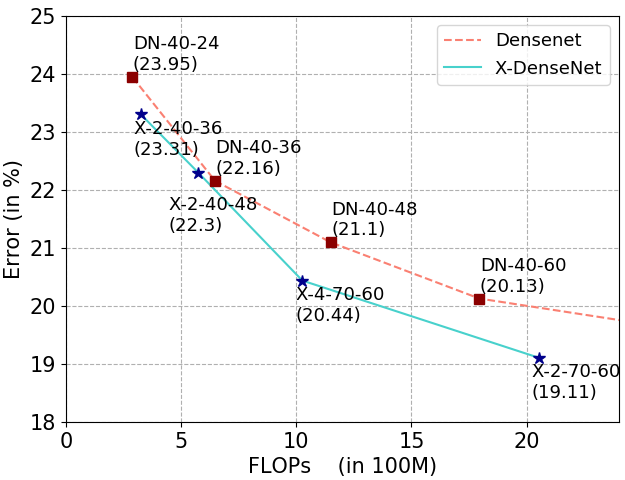} \\
\\
(a) CIFAR10 & (b) CIFAR100 \\
\end{tabular}

\caption{We show the error as a function of \#FLOPs during test-time (below) for DenseNet-BC with X-DenseNet-BCs on CIFAR10 and CIFAR100 datasets. We observe X-DenseNet-BCs achieve better performance tradeoffs over DenseNet-BC models. For each datapoint, we mention the X-C-D-G notation (see Section \ref{sec:denres}) along with the accuracy.
}

\label{fig:cifar}
\end{figure*}
We compare X-Conv against grouped convolutions using MobileNet-0.5 on the ImageNet classification task. We replace the $1\times 1$ convolutional layers in MobileNet-0.5 with X-Conv layers forming X-MobileNet-0.5. Similarly, we replace them with G-Conv layers to form Group-MobileNet-0.5. Note that we perform this only in layers with most number of parameters (after the 8th layer as given in Table 1 of \cite{howard2017mobilenets}). We present our results in Figure \ref{fig:mobilenet}. The reference original MobileNet-0.5 has an error of 36.6\% with a cost of 150M FLOPs. Additional implementation details are given in the supplementary material.

We can observe that X-MobileNets beat Group-MobileNets by over 4\% in terms of accuracy when we increase sparsity. This also demonstrates that X-Conv can be used to further improve the efficiency of even the most efficient architectures like MobileNet.

\subsection{Comparison with Efficient CNN Architectures}
\label{sec:denres}

In this section, we test whether Expander Graphs can improve the performance trade-offs even in state-of-the-art architectures such as DenseNet-BCs \cite{huang2016densely} and ResNets\cite{he2015convolutional} on the ImageNet \cite{deng2009imagenet} dataset. We additionally train DenseNet-BCs on CIFAR-10 and CIFAR-100 \cite{krizhevsky2009learning} datasets to demonstrate the robustness of our approach across datasets. 

Our X-ResNet-C-D is a $D$ layered ResNet that has every layer except the first and last replaced by an X-Conv layer that compresses connections between it and the previous layer by a factor of $C$. We compare across various models like ResNets-34,50,101. Similarly, our X-DenseNet-BC-C-D-G architecture has depth $D$, and growth rate $G$. We use DenseNet-BC-121-32,169-32,161-48,201-32 as base models. These networks have every layer except the first and last replaced by an X-Conv layer that compresses connections between it and the previous layer by a factor of $C$. More details are provided in the supplementary material.
\begin{minipage}{\textwidth}
  \begin{minipage}[b]{0.45\textwidth}
    \centering
    \includegraphics[scale=0.25]{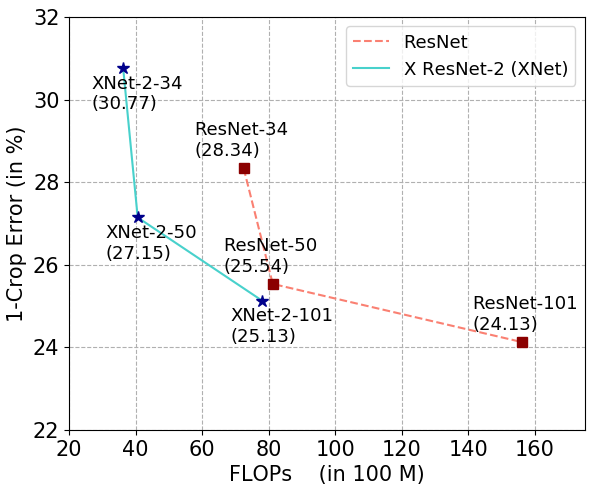}
\captionof{figure}{We show the error as a function of \#FLOPs to compare between ResNet and X-ResNet on the ImageNet dataset. We observe X-ResNets achieve better performance tradeoffs over original ResNet models.}
\label{fig:resnet}
 
  \end{minipage}
  \hfill
  \begin{minipage}[b]{0.5\textwidth}
    
    \resizebox{0.9\columnwidth}{!}{
    \begin{tabular}{|l|c|c|}
\hline
{\bf Model} & {\bf Accuracy}  & {\bf \#FLOPs}\\
\hline
ResNet &   & {\bf(in 100M)}\\
\hline
X-ResNet-2-34 & 69.23\%  & 35\\
X-ResNet-2-50 & {\bf 72.85\%}  & {\bf 40}\\
ResNet-34 & 71.66\%  & 70\\
X-ResNet-2-101 & {\bf 74.87\%}  & {\bf 80}\\
ResNet-50 & 74.46\%  & 80\\
ResNet-101 & 75.87\%  & 160\\
\hline
DenseNet-BC &  &   \\
\hline
X-DenseNet-BC-2-121 & 70.5\% & 28\\
X-DenseNet-BC-2-169 & 71.7\% & 33\\
X-DenseNet-BC-2-201 & 72.5\% & 43\\
X-DenseNet-BC-2-161 & {\bf 74.3\%} & {\bf 55}\\
DenseNet-BC-121 & 73.3\% & 55\\
DenseNet-BC-169 & 74.8\% & 65\\
DenseNet-BC-201 & 75.6\% & 85\\
DenseNet-BC-161 & 76.3\% & 110\\
\hline
\end{tabular}}
      \captionof{table}{Results obtained by ResNet and DenseNet-BC models on ImageNet dataset, ordered by \#FLOPs. or each datapoint, we use the X-C-D-G notation (see Section \ref{sec:denres}) along with the accuracy.}
\label{tab:imagenet}
    \end{minipage}
    
  \end{minipage}
\subsection{Comparison with Pruning Techniques}
\label{sec:prun}
We plot the performance tradeoff of X-ResNets against ResNets in Figure \ref{fig:resnet} . We achieve significantly better performance tradeoffs compared to the original model. More specifically, we can reduce the \#FLOPs in ResNets by half while incurring only 1-1.5\% decrease in accuracy. Also, we can compare models with similar \#FLOPs or accuracy with the help of Table \ref{tab:imagenet}. We observe that  X-ResNet-2-50 has 43\% fewer FLOPs than ResNet-34, but achieves a 1\% improvement in accuracy against it. Similarly, X-DenseNet-BC-2-161 has similar \#FLOPs as DenseNet-BC-121, but achieves a 1\% improvement in accuracy.  

To further prove the robustness of our approach on DenseNet-BC, we test the same on CIFAR10 and CIFAR100, and plot the tradeoff curve in Figure \ref{fig:cifar}.
We observe that we can achieve upto 33\% compression keeping accuracy constant on CIFAR-10 and CIFAR-100 datasets. \\

We compare our approach with methods which prune the weights during or after training. Our method can be thought of as constraining the weight matrices with a well studied sparse connectivity pattern even before the training starts. This results in fast training for the compact X-Conv models, while the trained pruning techniques face the following challenges: \\

\noindent 1) Slow initial training due to full dense model.\\
2) Several additional phases of pruning and retraining.\\

Hence they achieve the compactness and runtime efficiency only in test time. Nevertheless we show similar sparsity can be achieved by our approach without explicitly pruning. We benchmark on VGG16 and AlexNet architectures since most previous results in the pruning literature have been reported on these architectures. In Table \ref{tab:cifar_fullcomp}, we compare two X-VGG-16 models against existing pruning techniques. We achieve comparable accuracies to the previous state-of-the-art model with 50\% fewer parameters and \#FLOPs. Similarly, in Table \ref{tab:imagenet_fullcomp} we compare X-AlexNet with trained pruning techniques on the Imagenet dataset. Despite having poor connectivity due to parameters being concentrated only in the last three fully connected layers, 
we achieve similar accuracy to AlexNet model using only 7.6M-9.7M parameters out of 61M, comparable to the state-of-the-art pruning techniques which have upto 3.4M-5.9M parameters. Additionally, it is possible to improve compression by applying pruning methods on our compact architectures, but pruning X-Nets is out of the scope of our current work.
\begin{table}[t]
  \centering
\begin{tabular}{|l|c|c|c|}
\hline
{\bf Method} & {\sc { \bf Accuracy}} & {\bf \#Params} & {\bf Training }  \\
\hline
Li et al. \cite{li2016pruning} & 93.4\%  & 5.4M &  \xmark \\
Liu et al. \cite{liu2017learning} &  93.8\% & 2.3M &  \xmark \\
\hline
X-VGG16-1 & 93.4\%  & 1.65M (9x) &  \cmark \\
X-VGG16-2 & 93.0\%  & 1.15M (13x) &  \cmark \\
\hline
VGG16-Orig &  94.0\% & 15.0M & - \\
\hline
\end{tabular}
\captionof{table}{Comparison with other methods on CIFAR-10 dataset using VGG16 as the base model. We significantly outperform popular compression techniques, achieving similar accuracies with upto 13x compression rate.}
\label{tab:cifar_fullcomp}
  \end{table}
  
\begin{table}[t]
    \centering
\begin{tabular}{|l|c|c|c|}
\hline
{\bf Method} & {\bf Accuracy} & {\bf \#Params} & {\bf Training} \\
 & {\bf } &  & {\bf  Speedup}?\\
\hline
Network Pruning & & & \\
\hline
Collins et al.\cite{collins2014memory} & 55.1\% & 15.2M &  \xmark \\
Zhou et al. \cite{zhou2016less} & 54.4\% & 14.1M & \xmark \\
Han et al. \cite{han2015deep} &  57.2\% & 6.7M & \xmark  \\
Han et al. \cite{han2015deep} & 57.2\% & 6.7M & \xmark \\
 Srinivas et al. \cite{srinivas2017training} & 56.9\% & 5.9M & \xmark \\
Guo et al. \cite{guo2016dynamic} & 56.9\% & 3.4M & \xmark \\
X-AlexNet-1 & 55.2\% & 7.6M & \cmark \\
X-AlexNet-2 & 56.2\% & 9.7M & \cmark \\
\hline
AlexNet-Orig & 57.2\% & 61M & - \\
\hline
\end{tabular}
\captionof{table}{Comparison with other methods on ImageNet-2012 using AlexNet as the base model. We are able to achieve comparable accuracies using only 9.7M parameters.}
\label{tab:imagenet_fullcomp}
    \end{table}

\subsection{Stability of Models}
We give empirical evidence as well as a theoretical argument regarding the stability of our method. For the vanilla DNN training, the weights are randomly initialized, and randomized techniques like dropouts, augmentation are used. Hence there is some randomness present and is well accepted in DNN literature prior to our method. We repeat experiments on different datasets (Imagenet and CIFAR10) and architectures (VGG, DenseNet and MobileNet0.5) to empirically show that the accuracy of expander based models has variance similar to vanilla DNN training over multiple runs.

We repeated the experiments with independent sampling of random expanders on the VGG and DenseNet baselines on the CIFAR10 dataset. The results can be seen in Table~\ref{tab:cifar}. It is noted that the accuracy values changes only by less than $0.3$\% across runs and the standard deviation of expander method is also comparable to the vanilla DNN training.

We also repeated experiments of our main result, which is the comparison with grouped convolutions on ImageNet dataset. We rerun the experiment with MobileNet0.5 feature extractor twice with Groups and the expander method. As can be seen from Table~\ref{tab:imgnet}, the accuracy variations are comparable between the two models, and it is less than 1\%.

\begin{table}[!thb]
\begin{minipage}{.5\linewidth}
\centering
\scalebox{0.9}{
\begin{tabular}{|l|c|c|c|c|}
\hline
{\bf Model} & {\bf Accuracy\%} & {\bf Max\%} & {\bf Min\%}\\
\hline
VGG & 93.96$\pm0.12$ & 94.17 & 93.67 \\
\hline
X-VGG-1 & 93.31$\pm0.18$ & 93.66 & 93.06 \\
X-VGG-2 & 92.91$\pm0.19$ & 93.26 & 92.69 \\
\hline
XDNetBC-40-24 & 94.41$\pm$0.19 & 94.63 & 94.18 \\
XDNetBC-40-36 & 94.98$\pm$0.14 & 95.21 & 94.84 \\
XDNetBC-40-48 & 95.49$\pm$0.15 & 95.65 & 95.28 \\
XDNetBC-40-60 & 95.75$\pm$0.07 & 95.81 & 95.68 \\

\hline
\end{tabular}}
\caption{The accuracies (mean $\pm$ stddev) of various models over 10 training runs on CIFAR-10 dataset.} \label{tab:cifar}
\end{minipage}%
\hfill
 \begin{minipage}{.45\linewidth}
 \centering
 \scalebox{0.9}{
\begin{tabular}{|c|c|c|}
\hline
{\bf MobileNet} & {\bf Mean} & {\bf Range}\\
{\bf Variant}  & {\bf Accuracy} & {\bf (Max-Min)}\\
\hline
Base & $63.39\%$ & $0.11\%$\\

\hline
G2 & $57.45~\%$ & $0.06~\%$\\
X2 & \bf{$58.22~\%$} & $ 0.14~\%$\\
\hline
G4 & $49.41~\%$ & $0.55~\%$\\
X4 & \bf{$54.00~\%$} & $0.53~\%$\\
\hline
G8 & $45.13~\%$ & $0.03~\%$\\
X8 & \bf{$49.23~\%$} & $0.60~\%$\\
\hline
G16 & $39.03~\%$ & $0.64~\%$\\
X16 & \bf{$44.63~\%$} & $0.18~\%$\\
\hline
\end{tabular}}
\caption{The mean accuracy and range of variation over  2 runs of MobileNet0.5 variants on ImageNet dataset.} \label{tab:imgnet}
\end{minipage}%
\end{table}
A theoretical argument also concludes that choosing random graphs doesn't degrade stability. It is a well known result (See Theorem 4.4 in  \cite{pseudo}) in random graph theory, that graphs chosen randomly are well connected with overwhelmingly high probability (with only inverse exponentially small error, due to the Chernoff's Tail bounds) and satisfies the Expander properties. Hence the chance that for a specific run, the accuracy gets affected due to the selection of a particularly badly connected graph is insignificant.

\subsection{Training Wider and Deeper  networks}\label{sec:ultrawide}

\begin{figure*}[t]
\begin{tabular}{cc}

\includegraphics[scale=0.25]{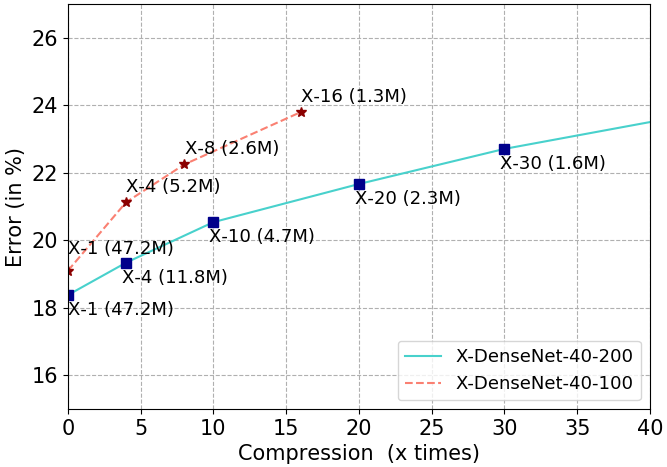}  & 
\includegraphics[scale=0.25] {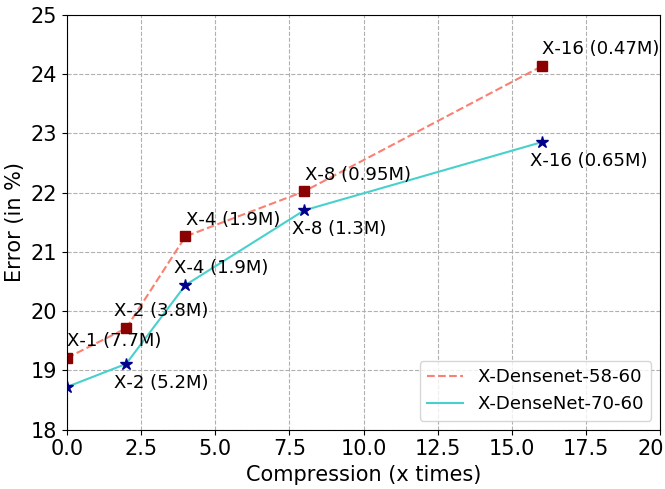}\\
(a) Effect of Width  & (b) Effect of Depth \\
\end{tabular}
\caption{We show the performance tradeoff obtained on training significantly wider and deeper networks on CIFAR-100 dataset. Every datapoint is X-$C$ specified along with the number of parameters, $C$ being the compression factor. We show that training wider or deeper networks along with more  compression using X-Nets achieve better accuracies with upto two-thirds of the total parameter and FLOPs on CIFAR-100 dataset. }
\label{fig:deepnet}
\end{figure*}

Since X-Nets involve constraining the weight matrices to sparse connectivity patterns before training, the fast algorithms can make it possible to utilize memory and runtime efficiently in training phase. This makes it possible to train significantly deeper and wider networks. Note the contrast with pruning techniques, where it is necessary to train the full, bulky model, inherently limiting the range of models that can be compressed.

Wide-DenseNets\footnote{https://github.com/liuzhuang13/DenseNet\#wide-densenet-for-better-timeaccuracy-and-memoryaccuracy-tradeoff} offered a better accuracy-memory-time trade-off. We increase the width and depth of these networks to train significantly wider and deeper networks. The aim is to study whether leveraging the effectiveness of X-Nets in this fashion can lead to better accuracies. 
  
We widen and deepen the DenseNet-BC-40-60 architecture, increasing the growth rate from 60 to 100 and 200 respectively and compare the effect of increasing width on these new models. Similarly, we increase the depth from 40 to 58 and 70 to obtain deeper networks. We benchmark these approaches using CIFAR-100 dataset and present the results in Figure \ref{fig:deepnet}. 

We have two interesting observations. First, the deeper X-DenseNet-BC-70-60 significantly outperforms X-DenseNet-BC-58-60 and wider X-DenseNet-40-200 outperforms X-DenseNet-BC-40-100 with fewer parameters for a wide range of $C$ values (Expander degree). 

The second interesting observation is the decreasing slope of the curves. This  indicates that expander graph modeling seems to be effective on wider and deeper X-Nets i.e X-DenseNet-BC models suffer lesser penalty with increasing depth and width compression. This enables X-Nets to work at high compression rates of 30x, compressing DenseNet-BC-40-200 model from 19.9B FLOPs to 0.6B FLOPs with only $4.3\%$ drop in accuracy. We hope this preliminary investigation holds significant value in alleviating the constraint of GPU memory and resources.

\section{Conclusion}
We proposed a new network layer architecture for deep networks using expander graphs that give strong theoretical guarantees on connectivity. The resulting architecture (X-Net) is shown to be highly efficient in terms of both computational requirements and model size. In addition to being compact and computationally efficient, the connectivity properties of the network allow us to achieve significant improvements over the state-of-the-art architectures in performance on a parameter or run-time budget. In short, we show that the use of principled approaches that sparsify a model while maintaining global information flows can help in developing efficient deep networks.

To the best of our knowledge, this is the first attempt at using theoretical results from graph theory in modeling connectivity to improve deep network architectures. We believe that the field of deep networks can gain significantly from other similar explorations.

\bibliographystyle{splncs}
\bibliography{egbib}
\ifarxiv
\appendix
\ifarxiv
\else
This appendix provides formal proofs for the theorems stated in Section 3 of the main paper. We also include tabulation of results of Figure 3 and 4, alongside additional results that could not be added in the paper due to space constraints.
\fi

\section{Explicit Expanders}
 Many constructions of expander graphs have been explored in the past (see \cite{expsurvey}). We will be using a construction that is comparatively easy to describe and implement. We will be considering Cayley graphs, which are obtained from the theory of finite Fields/Groups. The vertex set of such graphs is a group and the edges are defined by addition operation on the vertices. Also we will be describing expanders that are simple undirected graphs. Given an undirected graph $G=(V,E)$, we can obtain  a bipartite expander $G=(V,V',E)$, by making a copy of the vertices $V$ on the other side and adding edges according to $E$.

\subsection{Cayley Expander}
Let $V$ be a group under an operation $+$ and let $H \subset V$ be a set of generators of the group $V$. The Cayley graph defined by $V,H$ is the graph with vertex set $V$ and the edges $E =\{(x,x+h) : h \in H\}$. For our construction, we will consider the group $\{0,1\}^n$ under coordinate-wise XOR operations. For $H \subset \{0,1\}^n$, the Cayley graph defined by $H$ is $|H|$-regular. It is a well-known result in spectral graph theory that there are set of generators $H$, for which the Cayley graph defined by $\{0,1\}^n, H$ forms an expander with spectral gap $\gamma$.
\begin{theorem*}[Alon, Roichman, Proposition 4 \cite{cayley}]
For every $\epsilon$,  there exists explicit $H \subset \{0,1\}^n$ of size $\leq O(n^2/\epsilon^2)$ such that the Cayley graph defined by $\{0,1\}^n, H$ is an expander with spectral gap $\gamma = 1-\epsilon$.
\end{theorem*}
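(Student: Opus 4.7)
The plan is to reduce the spectral-gap condition to a well-studied pseudorandomness property ($\epsilon$-bias) via character theory, and then invoke an explicit construction of small-bias sample spaces.

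First, I would diagonalize the Cayley graph using Fourier analysis on the abelian group $\{0,1\}^n$. The characters of $\{0,1\}^n$ (under XOR) are indexed by $y \in \{0,1\}^n$ and given by $\chi_y(x) = (-1)^{\langle x,y\rangle}$. A standard computation shows that the adjacency matrix of the Cayley graph on $\{0,1\}^n$ with generator set $H$ is simultaneously diagonalized in this character basis, with eigenvalues $\lambda_y = \sum_{h \in H} (-1)^{\langle h,y\rangle}$. The top eigenvalue (at $y = 0$) equals $D := |H|$, so the spectral-gap requirement $1 - \lambda/D \geq 1 - \epsilon$ is equivalent to
\[
\Bigl|\sum_{h \in H} (-1)^{\langle h,y\rangle}\Bigr| \leq \epsilon\,|H|\qquad \text{for every } y \in \{0,1\}^n \setminus \{0\}.
\]
This is exactly the definition of $H$ being an $\epsilon$-biased set (small-bias sample space) over $\{0,1\}^n$.

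Next, I would establish the existence step probabilistically as a warm-up and sanity check on the target size. Sample the elements of $H$ i.i.d.\ uniformly from $\{0,1\}^n$; for any fixed nonzero $y$, the values $(-1)^{\langle h,y\rangle}$ are i.i.d.\ uniform $\pm 1$, so by a Chernoff/Hoeffding bound the probability of bias exceeding $\epsilon$ is at most $2\exp(-\epsilon^2 |H|/2)$. A union bound over the $2^n - 1$ nontrivial characters shows that $|H| = O(n/\epsilon^2)$ already suffices to guarantee a valid (non-explicit) $H$ with positive probability.

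Finally, I would upgrade this to an explicit construction of size $O(n^2/\epsilon^2)$ using the standard code-theoretic construction of small-bias spaces (Naor--Naor, or Alon--Goldreich--H\aa stad--Peralta): concatenate an outer Reed--Solomon code over a field of size $\Theta(n/\epsilon)$ with the inner Hadamard code, and take the columns of the resulting generator matrix as the elements of $H$. The bias bound follows from the distance properties of the concatenated code (every nonzero codeword has relative weight $\geq \tfrac{1}{2}-\tfrac{\epsilon}{2}$), which is exactly the $\epsilon$-bias condition after the character translation. The size is $O(n/\epsilon) \cdot O(n/\epsilon) = O(n^2/\epsilon^2)$, matching the stated bound, and the construction is deterministic and polynomial-time.

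The main obstacle is the last step: the character-theoretic reduction and the probabilistic existence argument are routine, but producing an \emph{explicit} small-bias set of size $O(n^2/\epsilon^2)$ genuinely requires a coding-theoretic construction and the verification that the distance of the concatenated code translates to the bias bound on the associated generator set. Everything else is a direct Fourier calculation on $\{0,1\}^n$.
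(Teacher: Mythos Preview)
The paper does not supply its own proof of this statement: it is quoted verbatim as a result of Alon and Roichman (with a citation) and immediately followed by the next section, so there is nothing in the paper to compare your argument against.

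That said, your proposal is the standard and correct route. The Fourier diagonalization of Cayley graphs on $\{0,1\}^n$ identifies the spectral-gap requirement with the $\epsilon$-bias condition exactly as you wrote, and then any of the classical explicit small-bias constructions (Naor--Naor, AGHP) of size $O(n^2/\epsilon^2)$ finishes the job. One small comment: your ``warm-up'' probabilistic step with $O(n/\epsilon^2)$ random generators is essentially the content of the general Alon--Roichman theorem specialized to $\{0,1\}^n$; the jump to $O(n^2/\epsilon^2)$ is the price paid for explicitness, and the paper's phrasing of the bound already bakes that in. Your identification of the explicit construction as the only nontrivial ingredient is accurate.
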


\section{Sensitivity in Expanders}
In this section, we will give details about some properties of expander graphs that are required for proving Theorems 1 and 2. Expander graphs are characterized by the spectral gap, which in turn implies the connectivity properties listed in Section 3.4.

The properties given in Section 3.4 follows from the fact that expanders have a spectral gap (for proofs see \cite{pseudo}). For the sensitivity proofs, we need the following lemmas. The first lemma states that having a spectral gap, implies that all set of vertices of size $\leq n/2$ expands.
\begin{lemma}[Theorem 4.6 \cite{pseudo}] 
\label{lab:spec}
If $G=(V,E)$ is an expander with spectral gap of $\gamma$ then for every subset $S\subset V$ of size $\leq |V|/2$, the size of the set of neighbors $|N(S)| \geq (1+\gamma) |S|$.
\end{lemma}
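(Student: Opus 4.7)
The plan is to derive the vertex-expansion statement from the spectral gap via the expander mixing lemma, which is the standard quantitative consequence of having $\lambda \le D(1-\gamma)$. The mixing lemma says that for every pair of subsets $S, T \subseteq V$,
$$\left| e(S,T) - \frac{D|S||T|}{n} \right| \le \lambda \sqrt{|S||T|\left(1 - \tfrac{|S|}{n}\right)\left(1 - \tfrac{|T|}{n}\right)}.$$
I would first prove this as a preliminary step by decomposing the indicator vectors $\mathbf{1}_S$ and $\mathbf{1}_T$ into their components along $\mathbf{1}$ (the top eigenvector) and orthogonal to it, then bounding $\mathbf{1}_T^{\top} A\, \mathbf{1}_S$ using $\|Av\| \le \lambda \|v\|$ for $v \perp \mathbf{1}$.

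The key trick is then to apply the mixing lemma to the pair $(S, T)$ with $T := V \setminus N(S)$. By the definition of $N(S)$, there is no edge between $S$ and its non-neighbors, so $e(S,T) = 0$. Writing $s := |S|$ and denoting $|N(S)| = \beta n$, $|S| = \alpha n$, the inequality collapses to
$$\frac{D\, \alpha n \cdot (1-\beta)n}{n} \;\le\; \lambda\, n\,\sqrt{\alpha(1-\beta)(1-\alpha)\beta}.$$
Squaring and rearranging, with $\mu := \lambda/D \le 1-\gamma$, this becomes $\alpha(1-\beta) \le \mu^2 (1-\alpha)\beta$, which solves to
$$\beta \;\ge\; \frac{\alpha}{\alpha + \mu^2(1-\alpha)}.$$

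Finally, to convert this lower bound into the stated multiplicative form $|N(S)| \ge (1+\gamma)|S|$, I would plug $\mu \le 1-\gamma$ into the above and use the hypothesis $\alpha \le 1/2$ to upper-bound the denominator. A short computation shows that the denominator $\alpha + (1-\gamma)^2(1-\alpha)$ is at most $1/(1+\gamma)$ whenever $\alpha \le 1/2$, yielding $\beta/\alpha \ge 1+\gamma$ as required. The main obstacle is precisely this last algebraic step: the naive form of the mixing lemma (without the $(1-|S|/n)(1-|T|/n)$ factors) only yields an additive bound like $|N(S)| \ge n - (1-\gamma)^2 n^2/|S|$, which is too weak to recover the clean $(1+\gamma)$ constant. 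Using the tighter normalized form of the mixing lemma, together with the hypothesis $|S|\le n/2$ to control $(1-\alpha)$, is what makes the constant come out cleanly.
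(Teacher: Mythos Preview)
The paper does not prove this lemma at all: it is quoted as Theorem~4.6 of \cite{pseudo} and used as a black box in the proof of Theorem~\ref{thm:conn}. So there is no ``paper's own proof'' to compare against.

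Your derivation is correct. Applying the refined mixing lemma to $T=V\setminus N(S)$ (where $e(S,T)=0$) gives, with $\alpha=|S|/n$, $\beta=|N(S)|/n$, $\mu=\lambda/D\le 1-\gamma$, the bound $\beta\ge \alpha/(\alpha+\mu^2(1-\alpha))$. The final algebraic step you flag as the ``obstacle'' also goes through: the denominator $\alpha+(1-\gamma)^2(1-\alpha)$ is linear and increasing in $\alpha$, so its maximum on $[0,1/2]$ is $\tfrac{1}{2}\bigl(1+(1-\gamma)^2\bigr)$; setting $x=1-\gamma\in[0,1]$ one checks
\[
(1+\gamma)\cdot\tfrac{1+(1-\gamma)^2}{2}=\tfrac{(2-x)(1+x^2)}{2}=1-\tfrac{x(1-x)^2}{2}\le 1,
\]
so indeed $\alpha+(1-\gamma)^2(1-\alpha)\le 1/(1+\gamma)$ and $\beta\ge(1+\gamma)\alpha$.

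For what it is worth, the proof in the cited source proceeds slightly differently but is essentially equivalent: one writes $\chi_S=(|S|/n)\mathbf{1}+\chi_S^{\perp}$, bounds $\|M\chi_S\|_2^2\le |S|^2/n+(1-\gamma)^2|S|(1-|S|/n)$ using the spectral gap, and then applies Cauchy--Schwarz on the support $N(S)$ of $M\chi_S$ to get $|S|^2=\|M\chi_S\|_1^2\le |N(S)|\,\|M\chi_S\|_2^2$. This lands on the identical intermediate inequality $\beta\ge\alpha/(\alpha+(1-\gamma)^2(1-\alpha))$; your route simply packages the orthogonal decomposition inside the mixing lemma.
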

The next lemma is known as the expander mixing lemma, deals with the uniform connectivity properties of the expander.
\begin{lemma}[Lemma 4.15 \cite{pseudo}]
\label{lab:exp-mix-lem}
If $G=(V,E)$ is an $D$-regular expander with spectral gap of $\gamma$ then for every subset $S,T\subset V$,
$$\left| E(S,T) - D \cdot |S| \cdot |T| / n  \right| \leq (1-\gamma)\sqrt{|S| \cdot |T|}$$
where $E(S,T)$ is the set of edges from $S$ to $T$.
\end{lemma}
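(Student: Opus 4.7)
The plan is to prove Lemma~\ref{lab:exp-mix-lem} by the classical spectral argument: write $E(S,T)$ as a bilinear form in the adjacency matrix, split the indicator vectors into their components along and perpendicular to the top eigenvector, and bound the ``orthogonal-complement'' contribution by the spectral gap.

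First I would set up the linear algebra. Let $A$ be the $n\times n$ symmetric adjacency matrix of $G$, and let $\mathbf{1}_S,\mathbf{1}_T\in\{0,1\}^n$ be the indicator vectors of $S$ and $T$. By definition, $E(S,T)=\mathbf{1}_S^{\top}A\,\mathbf{1}_T$. Because $G$ is $D$-regular, the all-ones vector $\mathbf{1}$ is an eigenvector of $A$ with eigenvalue $D$, and the spectral-gap hypothesis from Section~3.2 says every other eigenvalue $\mu$ satisfies $|\mu|\le(1-\gamma)D$; hence the operator norm of $A$ restricted to $\mathbf{1}^{\perp}$ is at most $(1-\gamma)D$. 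I would then decompose
$$\mathbf{1}_S=\tfrac{|S|}{n}\mathbf{1}+u,\qquad \mathbf{1}_T=\tfrac{|T|}{n}\mathbf{1}+v,\qquad u,v\perp\mathbf{1}.$$
Expanding $\mathbf{1}_S^{\top}A\,\mathbf{1}_T$ and using $A\mathbf{1}=D\mathbf{1}$ together with $\langle \mathbf{1},u\rangle=\langle \mathbf{1},v\rangle=0$, the three cross terms collapse and one is left with
$$E(S,T)\;=\;D\cdot\frac{|S|\,|T|}{n}\;+\;u^{\top}A\,v.$$
This already isolates the ``main term'' that the lemma claims to approximate, so the remaining task is to bound the error $|u^{\top}A\,v|$.

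Next I would bound the error term. Because $u,v\in\mathbf{1}^{\perp}$, Cauchy--Schwarz combined with the spectral-gap bound on the operator norm gives $|u^{\top}A\,v|\le(1-\gamma)D\,\|u\|_2\|v\|_2$. Pythagoras on $\mathbf{1}_S=\frac{|S|}{n}\mathbf{1}+u$ yields $\|u\|_2^2=|S|-|S|^2/n\le|S|$, and likewise $\|v\|_2\le\sqrt{|T|}$. Chaining these estimates delivers the claimed mixing inequality, recovering the right-hand side of Lemma~\ref{lab:exp-mix-lem} up to the standard normalization convention for $\lambda$. The main obstacle, as I see it, is not mathematical but notational: the bound as stated carries $(1-\gamma)\sqrt{|S||T|}$ rather than $(1-\gamma)D\sqrt{|S||T|}$, which is consistent with the derivation only under a normalized-adjacency-matrix convention where $(1-\gamma)$ plays the role of $\lambda/D$. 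I would reconcile this convention explicitly at the start of the proof, after which the three-line spectral computation above carries through verbatim.
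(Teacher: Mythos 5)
Your proof is correct, and it is the standard spectral argument for the expander mixing lemma. Note that the paper does not prove this statement at all --- it imports it by citation from \cite{pseudo} (Lemma 4.15 there) --- so there is no in-paper proof to compare against; your derivation is exactly the argument one would find in that reference. Your closing observation is also a genuine catch rather than a mere notational quibble: under the paper's own conventions (unnormalized adjacency matrix, $D$-regularity, spectral gap defined by $1-\lambda/D\ge\gamma$), the error term your computation produces is $(1-\gamma)D\sqrt{|S|\,|T|}$, and the $D$ cannot be dropped --- the stated bound $(1-\gamma)\sqrt{|S|\,|T|}$ is dimensionally inconsistent with the main term $D|S||T|/n$ and is only correct after passing to the normalized random-walk matrix (equivalently, dividing the whole inequality by $D$), which is the convention used in the cited source. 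Stating that renormalization explicitly at the outset, as you propose, is the right fix.
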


In this section, we prove Theorem 1 and Theorem 2 using the properties described above along with the connectivity properties defined in Section 3.1 of the paper.
\begin{theorem}[Sensitivity of X-Nets]\label{thm:conn}
Let $n$ be the number of input as well as output nodes in the network and $G_1,G_2,\cdots, G_t$ be $D$ regular bipartite expander graphs with $n$ nodes on both sides. Then  
every output neuron is sensitive to every input in a Deep X-Linear Network defined by $G_i$'s with depth $t = O( \log n)$.
\end{theorem}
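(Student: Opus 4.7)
The plan is to fix an arbitrary input $u \in V_0$ and track, layer by layer, the set of nodes reachable from $u$, showing that this set covers all of $V_t$ once $t = O(\log n)$. Concretely, set $S_0 = \{u\}$ and $S_{i+1} = N_{G_{i+1}}(S_i)$, so that $S_i \subseteq V_i$ is exactly the set of nodes reachable from $u$ through $i$ layers. The statement that every output is sensitive to $u$ is equivalent to $S_t = V_t$, so it suffices to prove this at depth $t = O(\log n)$.

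The argument splits into a growth phase and a saturation phase. In the growth phase, while $|S_i| \leq n/2$, Lemma~\ref{lab:spec} applied to $G_{i+1}$ gives $|S_{i+1}| = |N(S_i)| \geq (1+\gamma)|S_i|$, where $\gamma$ is the spectral gap common to the expanders $G_1,\dots,G_t$. Thus $|S_i|$ grows geometrically, and after $i_1 = O((\log n)/\log(1+\gamma)) = O(\log n)$ layers we have $|S_{i_1}| > n/2$.

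In the saturation phase, let $T_i = V_i \setminus S_i$. By the definition of $S_{i+1}$, every vertex in $T_{i+1}$ has no neighbor in $S_i$, so the neighborhood of $T_{i+1}$ in $V_i$ is entirely contained in $T_i$. Using $D$-regularity on both sides of $G_{i+1}$ (which makes Lemma~\ref{lab:spec} symmetric across the bipartition), while $|T_{i+1}| \leq n/2$ we obtain $|T_i| \geq |N(T_{i+1})| \geq (1+\gamma)|T_{i+1}|$, i.e.\ $|T_{i+1}| \leq |T_i|/(1+\gamma)$. Starting from $|T_{i_1}| < n/2$, the complements $|T_i|$ shrink geometrically, so after $i_2 = O(\log n)$ additional layers $|T_t| < 1$, forcing $T_t = \emptyset$ and hence $S_t = V_t$. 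Since $u$ was arbitrary, this gives sensitivity of every output to every input at total depth $t = i_1 + i_2 = O(\log n)$.

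The main obstacle is the direction switch in the saturation phase: one must justify applying the expansion lemma to a subset of the output side $V_{i+1}$, and then recognize that the backward neighborhood is trapped inside $T_i$ by construction. The first point uses $D$-regularity on both sides so that the adjacency operator viewed from either side has the same nontrivial spectrum; the second is immediate from the fact that $T_{i+1}$ is the complement of the image of $S_i$. Once these observations are in place, the proof is essentially two geometric series glued at the threshold $n/2$, and the constant hidden in $O(\log n)$ depends only on $\gamma$.
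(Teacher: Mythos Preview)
Your proof is correct, but it follows a different route from the paper's. The paper uses a \emph{meet-in-the-middle} argument: it grows a forward reachable set $N_i(u)$ from the input $u$ through $G_1,G_2,\ldots$ and, symmetrically, a backward reachable set $N_j(v)$ from the output $v$ through $G_t,G_{t-1},\ldots$, applying Lemma~\ref{lab:spec} to each until both sets exceed $n/2$; at the common middle layer the two sets must intersect, producing a $u$--$v$ path. Your argument instead runs entirely forward: after the growth phase pushes $|S_i|$ past $n/2$, you switch to the complement $T_i$ and use backward expansion to show it contracts geometrically to empty. Both approaches rely on expansion in both directions of the bipartition (hence implicitly on $D$-regularity on both sides, which you flag explicitly while the paper leaves it tacit), and both yield $t=O(\log n)$ with the same $\gamma$-dependence. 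The paper's version is a little slicker because the two halves are the same lemma applied symmetrically and it never has to check that a set stays below the $n/2$ threshold. Your version, by contrast, proves the slightly stronger uniform statement $S_t=V_t$ directly, but needs one extra sentence to close the induction in the saturation phase: you should note that once $|T_{i_1}|<n/2$, the complement cannot jump back above $n/2$ at the next layer (else any size-$\lfloor n/2\rfloor$ subset of $T_{i_1+1}$ would expand backward into $T_{i_1}$, a contradiction), so the hypothesis $|T_{i+1}|\le n/2$ needed to invoke Lemma~\ref{lab:spec} is maintained throughout.
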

\begin{proof}
For showing sensitivity, we show that for every pair of input and output $(u,v)$,  there is a path in the X-Net with the $i^{th}$ edge from the $i^{th}$ graph. 
We use the expansion property of the expander graphs. Let $N_1(u)$ be the set of neighbors of $u$ in $G_1$ and $N_i(u)$ be the set of neighbors of $N_{i-1}(u)$ in the graph $G_i$. Since each of the graphs are expanding $|N_{i}(u)| \geq (1+\gamma) \times |N_{i-1}(u)|$, using Lemma \ref{lab:spec}. Since $(1+\gamma) > 1$, we can obtain that for $i = O(\log n)$, $|N_i(u)| \geq n/2$. Similarly we can start from $v$ and define $N_1(v)$ as the set of neighbors of $v$ in $G_t$ and $N_i(v)$ be the set of neighbors of $N_{i-1}(v)$ in the graph $G_{n-i -1}$. Due to the expansion property, for $j = O(\log n)$, $|N_j(v)| \geq n/2$. Now we choose $t = i + j = O(\log n)$ so that the $i^{th}$ graph from $1$ is the same as $j+1^{th}$ graph from $t$. Then we will have that $N_i(u) \cap N_j(v) \neq \phi$. That is there is some vertex in the $i^{th}$ graph that is both connected to $u$ and $v$, which implies that there is a path from $u$ to $v$.
\end{proof}

\begin{theorem}[Mixing in Deep Expander Networks]
Let $n$ be the number of input as well as output nodes in the network and $G_1,G_2,\cdots, G_t$ be $D$ regular bipartite expander graphs with $n$ nodes on both sides. Let $S,T$ be subsets of input and output nodes in the X-Linear Network defined by the $G_i$'s. The number of paths between $S$ and $T$ is $\approx D|S||T|/n$
\end{theorem}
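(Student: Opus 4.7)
The plan is to count paths algebraically via biadjacency matrices and then run a spectral argument that iterates the expander mixing lemma (Lemma \ref{lab:exp-mix-lem}) through the layers. Let $A_i$ denote the $n\times n$ biadjacency matrix of $G_i$, so $(A_i)_{uv}$ is $1$ exactly when there is an edge from vertex $u$ in layer $i-1$ to vertex $v$ in layer $i$. Writing $\mathbf{1}_S,\mathbf{1}_T \in \{0,1\}^n$ for the indicator vectors of $S$ and $T$, the number of paths between $S$ and $T$ is exactly
\begin{equation*}
P(S,T) \;=\; \mathbf{1}_T^{\top} A_t A_{t-1} \cdots A_1 \mathbf{1}_S .
\end{equation*}

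The first step is the orthogonal decomposition $\mathbf{1}_S = (|S|/n)\mathbf{1} + r_S$ with $r_S \perp \mathbf{1}$, and analogously $\mathbf{1}_T = (|T|/n)\mathbf{1} + r_T$. Because each $G_i$ is $D$-regular bipartite, the all-ones vector $\mathbf{1}$ is an eigenvector of every $A_i$ with eigenvalue $D$; the spectral-gap hypothesis gives $\|A_i v\| \le (1-\gamma) D\,\|v\|$ for every $v \perp \mathbf{1}$, and $D$-regularity on both sides of the bipartition ensures that each $A_i$ maps $\mathbf{1}^{\perp}$ into $\mathbf{1}^{\perp}$.

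Next I would apply $A_t \cdots A_1$ to $\mathbf{1}_S$ and split the result into its $\mathbf{1}$-component and a remainder. The $\mathbf{1}$-component contributes exactly $(|S|/n) D^t \mathbf{1}$; the remainder is $A_t \cdots A_1 r_S$, and iterating the operator-norm bound on $\mathbf{1}^{\perp}$ gives $\|A_t \cdots A_1 r_S\| \le ((1-\gamma)D)^t \|r_S\|$. Taking inner product with $\mathbf{1}_T$ yields
\begin{equation*}
P(S,T) \;=\; \frac{D^{\,t}\,|S|\,|T|}{n} \;+\; \bigl\langle r_T,\, A_t \cdots A_1 r_S\bigr\rangle ,
\end{equation*}
and a Cauchy--Schwarz bound on the cross term, together with $\|r_S\| \le \sqrt{|S|}$ and $\|r_T\| \le \sqrt{|T|}$, controls the deviation by $((1-\gamma)D)^t \sqrt{|S||T|}$. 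The base case $t=1$ collapses to Lemma \ref{lab:exp-mix-lem}, so the argument is really a clean spectral lifting of the single-layer mixing lemma to a product of operators.

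The main obstacle is making the error term genuinely negligible compared with the main term $D^t|S||T|/n$; this amounts to requiring $((1-\gamma)/1)^t \cdot n \ll \sqrt{|S||T|}$, i.e.\ that $|S|$ and $|T|$ are not too small relative to $n$. For the explicit Cayley expanders introduced earlier, $1-\gamma$ is a constant strictly below $1$, so $t = \Omega(\log n)$ suffices and the result ties in cleanly with the depth bound obtained for Theorem \ref{thm:conn}. The remaining steps are routine linear algebra: verifying $A_i(\mathbf{1}^{\perp}) \subseteq \mathbf{1}^{\perp}$, unrolling the iterated operator-norm bound, and applying Cauchy--Schwarz.
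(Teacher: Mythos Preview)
Your proof is correct and rests on the same spectral idea as the paper's, but the packaging differs. The paper argues by reduction: it observes that the composite of the $G_i$'s has adjacency matrix $A_t\cdots A_1$, asserts that this product graph is itself an expander (with spectral gap stated as $\gamma^t$ in the identical-graph case and $\gamma_{\min}^t$ in general), and then invokes Lemma~\ref{lab:exp-mix-lem} once on that product graph as a black box. You instead unroll the proof of the mixing lemma and apply it directly to the operator product, iterating the bound on $\mathbf{1}^{\perp}$ layer by layer. Your route is slightly more elementary, handles distinct $G_i$'s uniformly without a separate argument about the composite spectral gap, and makes the deviation term $((1-\gamma)D)^t\sqrt{|S|\,|T|}$ explicit; the paper's reduction is terser but hides this constant inside the black-box lemma. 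Incidentally, your main term $D^{\,t}|S||T|/n$ is the right count for $t$-step paths---the $D|S||T|/n$ in the statement is evidently a slip carried over from the single-layer version in the main text, and the paper's own proof (applying the mixing lemma to the $D^t$-regular graph $G^t$) would likewise yield $D^t$.
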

\begin{proof}
First we prove the theorem for the case when $G_1 = G_2 \cdots = G_t = G$. Note that for $t=1$, the theorem is same as the expander mixing lemma (Lemma \ref{lab:exp-mix-lem}). For $t>1$, consider the graph of $t$ length paths denoted by $G^t$. An edge in this graph denotes that there is a path of length $t$ in $G$. Observe that the adjacency matrix of $G^t$ is given by the $t$th power of the adjacency matrix of $G$ and hence the spectral gap of $G^t$, $\gamma_t = \gamma^t$. Now applying the expander mixing lemma on $G^t$ (Lemma \ref{lab:exp-mix-lem}), proves the theorem.

For the case when the graph $G_i$'s are different, let $\gamma_{\min}$ be the minimal spectral gap among the graphs. Since all the graphs are $D$-regular, the largest eigenvector is the all ones vector with eigenvalue $D$ for all the graphs. The eigenvector corresponding to second largest eigenvalue can be different for each graph, but they are orthogonal to the all ones vector. Hence the spectral gap of the $t$ length path graph $G$ is at least $\gamma_{\min}^t$. Finally we apply the expander mixing lemma on $G^t$ to prove the theorem.
\end{proof}

\section{Model Details}

We discuss the model structures in detail, along with tabulated  values of size and flops of various architectures presented as graphs in the main paper. 

\begin{table}[!tbh]
\centering
\resizebox{0.6\columnwidth}{!}{
\begin{tabular}{|l|c|c|c|}
\hline
{\bf AlexNet} & {\bf Filter Shape} & {\bf Filter Shape} & {\bf Filter Shape} \\
\hline
 &  & ({\bf X-AlexNet-1}) & ({\bf X-AlexNet-2})\\
 \hline
Conv2d & 64 x 3 x 11 x 11 & 64 x 3 x 11 x 11 & 64 x 3 x 11 x 11\\
Conv2d & 192 x 64 x 5 x 5 & 192 x 64 x 5 x 5 & 192 x 64 x 5 x 5\\
Conv2d & 384 x 192 x 3 x 3 & 384 x 192 x 3 x 3 & 384 x 192 x 3 x 3\\
Conv2d & 256 x 384 x 3 x 3 & 256 x 384 x 3 x 3 & 256 x 384 x 3 x 3\\
Conv2d & 256 x 256 x 3 x 3 & 256 x 256 x 3 x 3 & 256 x 256 x 3 x 3\\
\hline
Linear & 9216 x 4096 & 1024 x 4096 & 512 x 4096\\
Linear & 4096 x 4096 & 512 x 4096 & 512 x 4096\\
Linear & 4096 x 1000 & 1024 x 1000 & 1024 x 1000\\
\hline
\end{tabular}}
\caption{Filter sizes for the AlexNet model. Notice the filter sizes of the linear layers of the original model has $|V|\times |U|$ parameters, whereas X-AlexNet models have $|V|\times D$ parameters. Note that $D << |U|$ as stated in Section 3.2. Hence, expander graphs model connections in linear layers (X-Linear) effectively.}
\label{tab:vgg}
\end{table}
\begin{table}[!tbh]
\centering
\resizebox{0.6\columnwidth}{!}{
\begin{tabular}{|l|c|c|c|}
\hline
{\bf VGG} & {\bf Filter Shape} & {\bf Filter Shape} & {\bf Filter Shape}\\
\hline
 &  & ({\bf X-VGG16-1}) & ({\bf X-VGG16-2})\\
 \hline
Conv2d & 64 x 3 x 3 x 3 & 64 x 3 x 3 x 3 & 64 x 3 x 3 x 3\\
Conv2d & 64 x 64 x 3 x 3 & 64 x 64 x 3 x 3 & 64 x 64 x 3 x 3\\
Conv2d & 128 x 64 x 3 x 3 & 128 x 64 x 3 x 3 & 128 x 64 x 3 x 3\\
Conv2d & 128 x 128 x 3 x 3 & 128 x 64 x 3 x 3 & 128 x 64 x 3 x 3\\
Conv2d & 256 x 128 x 3 x 3 & 256 x 32 x 3 x 3 & 256 x 16 x 3 x 3\\
Conv2d & 256 x 256 x 3 x 3 & 256 x 32 x 3 x 3 & 256 x 16 x 3 x 3\\
Conv2d & 256 x 256 x 3 x 3 & 256 x 32 x 3 x 3 & 256 x 16 x 3 x 3\\
Conv2d & 512 x 256 x 3 x 3 & 512 x 32 x 3 x 3 & 512 x 16 x 3 x 3\\
Conv2d & 512 x 512 x 3 x 3 & 512 x 32 x 3 x 3 & 512 x 16 x 3 x 3\\
Conv2d & 512 x 512 x 3 x 3 & 512 x 32 x 3 x 3 & 512 x 16 x 3 x 3\\
Conv2d & 512 x 512 x 3 x 3 & 512 x 32 x 3 x 3 & 512 x 16 x 3 x 3\\
Conv2d & 512 x 512 x 3 x 3 & 512 x 32 x 3 x 3 & 512 x 16 x 3 x 3\\
Conv2d & 512 x 512 x 3 x 3 & 512 x 32 x 3 x 3 & 512 x 16 x 3 x 3\\
\hline
Linear & 512 x 512 & 128 x 512 & 128 x 512\\
Linear & 512 x 10 & 512 x 10 & 512 x 10\\
\hline
\end{tabular}}
\caption{Filter sizes for the VGG-16 model on CIFAR-10 dataset. The filter sizes given are $|V|\times |U| \times c \times c$ in original VGG network, $|V|\times D \times c \times c$ in our X-VGG16 models. Note that $D << |U|$ as stated in Section 3.2. Hence, expander graphs model connections in Convolutional layers (X-Conv) effectively.}
\label{tab:alexnet}
\end{table}

\begin{table}[!tbh]
\centering
\resizebox{0.6\columnwidth}{!}{
\begin{tabular}{|l|c|c|c|}
\hline
{\bf Model} & {\bf Accuracy} & {\bf \#Params } & {\bf \#FLOPs}\\
\hline
CIFAR10 &  & {\bf(in M)} & {\bf(in 100M)}\\
\hline
X-DenseNetBC-2-40-24 & {\bf 94.83\%} & {\bf 0.4M} & {\bf 1.44}\\
DenseNetBC-40-24 & 94.79\% & 0.7M & 2.88\\
X-DenseNetBC-2-40-36 & 94.98\% & 0.75M & 3.24\\
X-DenseNetBC-2-40-48 & {\bf 95.48\%} & {\bf 1.4M} & {\bf 5.75}\\
DenseNetBC-40-36 & 95.26\% & 1.5M & 6.47\\
X-DenseNetBC-2-40-60 & {\bf 95.71\%} & {\bf 2.15M} & {\bf 8.98}\\
DenseNetBC-40-48 & 95.64\% & 2.8M & 11.50\\
DenseNetBC-40-60 & 95.91\% & 4.3M & 17.96\\
\hline
CIFAR100 &  &  &\\
\hline
X-DenseNetBC-2-40-24 & 74.37\% & 0.4M & 1.44\\
DenseNetBC-40-24 & 76.05\% & 0.7M & 2.88\\
X-DenseNetBC-2-40-36 & {\bf 76.69\%} & {\bf 0.75M} & {\bf 3.24}\\
DenseNetBC-40-36 & 77.84\% & 1.5M & 6.47\\
X-DenseNetBC-2-40-60 & 78.53\% & 2.15M & 8.98\\
X-DenseNetBC-4-70-60 & {\bf 79.56\%} & {\bf 2.6M} & {\bf 10.26}\\
DenseNetBC-40-48 & 79.03\% & 2.8M & 11.50\\
DenseNetBC-40-60 & 79.87\% & 4.3M & 17.96\\
X-DenseNetBC-2-70-60 & 80.89\% & 5.18M & 20.52\\
DenseNetBC-70-60 & 81.28\% & 10.36M & 41.05\\
\hline
\end{tabular}}
\caption{Results obtained on the state-of-the-art models on CIFAR-10 and CIFAR-100 datasets, ordered by FLOPs per model. X-Nets give significantly better accuracies with corresponding DenseNet models in the same limited computational budget and correspondingly significant parameter and FLOP reduction for models with similar accuracy.}
\label{tab:cifar}
\end{table}
\subsection{Filter structure of AlexNet and VGG}

In Tables \ref{tab:vgg} and \ref{tab:alexnet}, the detailed layer-wise filter structure is tabulated as stated in Section 5.3 of the paper. We compare the sizes of input channels between the filters, and show that with X-Conv and X-Linear layers, we can train models effectively even with upto 32x and 18x times smaller filters in input dimension in VGG16 and AlexNet models respectively. Hence, modeling connections as weighted adjacency matrix of an Expander graph is an effective method to model connections between neurons, producing highly efficient X-Nets.

\subsection{Results}
As stated in Section 5.2, Tables \ref{tab:cifar} and \ref{tab:imagenet} presented below give the detailed accuracy, parameters and FLOPs of models displayed in the Figure 3 in the paper. Details of other models are also provided in the same table, which could not be displayed in the paper due to lack of space.

Table \ref{tab:cifar} displays the performance of X-DenseNet models on the CIFAR-10 and CIFAR-100 datasets. If we compare models that have similar number of parameters, we achieve around 0.2\% and 0.6\% increase in accuracy over DenseNet-BC models on CIFAR-10 and CIFAR-100 datasets respectively. In the same manner, we can achieve upto using only two-thirds of the parameter and runtime cost cost respectively, keeping accuracy constant on CIFAR-10 and CIFAR-100 datasets as stated in the paper.

\begin{table}[!tbh]
\centering
\resizebox{0.6\columnwidth}{!}{
\begin{tabular}{|l|c|c|c|}
\hline
{\bf Model} & {\bf Accuracy} & {\bf \#Params } & {\bf \#FLOPs}\\
\hline
ResNet &  & {\bf(in M)} & {\bf(in 100M)}\\
\hline
X-ResNet-2-34 & 69.23\% & 11M & 35\\
X-ResNet-2-50 & {\bf 72.85\%} & {\bf 13M} & {\bf 40}\\
ResNet-34 & 71.66\% & 22M & 70\\
X-ResNet-2-101 & {\bf 74.87\%} & {\bf 22.5M} & {\bf 80}\\
ResNet-50 & 74.46\% & 26M & 80\\
ResNet-101 & 75.87\% & 45M & 160\\
\hline
DenseNetBC &  &  & \\
\hline
MobileNet \cite{howard2017mobilenets} & 70.6\% & 4.2M & 5.7 \footnote{These are reported as mult-add operations} \\
ShuffleNet \cite{zhang2017shufflenet} & 70.9\% & 5M & 5.3 \footnote{These are reported as mult-add operations} \\
X-DenseNetBC-2-121 & {\bf 70.5\%} & {\bf 4M} & 28\\
X-DenseNetBC-2-169 & 71.7\% & 7M & 33\\
X-DenseNetBC-2-201 & 72.5\% & 10M & 43\\
X-DenseNetBC-2-161 & {\bf 74.3\%} & 14.3M & {\bf 55}\\
DenseNetBC-121 & 73.3\% & 8M & 55\\
DenseNetBC-169 & 74.8\% & 14M & 65\\
DenseNetBC-201 & 75.6\% & 20M & 85\\
DenseNetBC-161 & 76.3\% & 28.5M & 110\\
\hline
\end{tabular}}
\caption{Results obtained on the state-of-the-art models on ImageNet dataset, ordered by FLOPs. We also observe that X-DenseNetBC models outperform ResNet and X-ResNet models in both compression, parameters and FLOPs and achieve comparable accuracies with the highly efficient MobileNets and ShuffleNets in the same parameter budget, albeit with much higher FLOPs due to architectural constraints.}
\label{tab:imagenet}
\end{table}

Similarly, Table \ref{tab:imagenet} displays the performance of X-ResNet and X-DenseNet models on the ImageNet datasets. We can observe that we achieve around 3.2\% and 1\% increase in accuracy over ResNet and DenseNet-BC models in the same computational budget. Also, we can observe that we require approximately 15\% less FLOPs for achieving similar accuracies over DenseNet models and 15\% less parameters for achieving similar accuracies over the ResNet model respectively as stated in the paper. We also observe that X-DenseNetBC models outperform ResNet and X-ResNet models in both compression, parameters and FLOPs and achieve comparable accuracies with MobileNets \cite{howard2017mobilenets} and ShuffleNets \cite{zhang2017shufflenet} in the same parameter budget, albeit with much higher computational cost due to architectural constraints. 

\begin{table}[!tbh]
\centering
\resizebox{0.6\columnwidth}{!}{
\begin{tabular}{|l|c|c|c|}
\hline
{\bf Model} & {\bf Accuracy} & {\bf \#Params } & {\bf \#FLOPs}\\
\hline
Wider & & {\bf(in M)} & {\bf(in 100M)} \\
\hline
DenseNetBC-40-60 & 79.87\% & 4.3M & 17.96\\
X-DenseNetBC-2-40-60 & 78.53\% & 2.15M & 8.98\\
X-DenseNetBC-4-40-60 & 77.54\% & 1.08M & 4.49\\
X-DenseNetBC-8-40-60 & 75.29\% & 0.54M & 2.24\\
X-DenseNetBC-16-40-60 & 74.44\% & 0.27M & 1.12\\
\hline
DenseNetBC-40-100 & 80.9\% & 11.85M & 49.85\\
X-DenseNetBC-4-40-100 & 78.87\% & 2.9M & 12.46\\
X-DenseNetBC-8-40-100 & 77.75\% & 1.48M & 6.23\\
X-DenseNetBC-16-40-100 & 76.2\% & 0.74M & 3.12\\
\hline
DenseNetBC-40-200 & 81.62\% & 47.19M & 199.28\\
X-DenseNetBC-4-40-200 & 80.66\% & 11.79M & 49.82\\
X-DenseNetBC-10-40-200 & 79.46\% & 4.71M & 19.93\\
X-DenseNetBC-20-40-200 & 78.33\% & 2.3M & 9.96\\
X-DenseNetBC-30-40-200 & 77.29\% & 1.6M & 6.64\\
X-DenseNetBC-50-40-200 & 75.7\% & 0.9M & 3.99\\
X-DenseNetBC-80-40-200 & 73.26\% & 0.5M & 2.49\\
\hline
Deeper &  &  &\\
\hline
DenseNetBC-40-60 & 79.87\% & 4.3M & 17.96\\
X-DenseNetBC-2-40-60 & 78.53\% & 2.15M & 8.98\\
X-DenseNetBC-8-40-60 & 77.54\% & 0.54M & 2.24\\
X-DenseNetBC-16-40-60 & 75.29\% & 0.27M & 1.12\\
\hline
DenseNetBC-58-60 & 80.79\% & 7.66M & 30.96\\
X-DenseNetBC-2-58-60 & 80.29\% & 3.83M & 15.48\\
X-DenseNetBC-4-58-60 & 78.74\% & 1.9M & 7.74\\
X-DenseNetBC-8-58-60 & 77.98\% & 0.95M & 3.87\\
X-DenseNetBC-16-58-60 & 75.87\% & 0.47M & 1.93\\
\hline
DenseNetBC-70-60 & 81.28\% & 10.36M & 41.05\\
X-DenseNetBC-2-70-60 & 80.89\% & 5.18M & 20.52\\
X-DenseNetBC-4-70-60 & 79.56\% & 2.6M & 10.26\\
X-DenseNetBC-8-70-60 & 77.48\% & 1.3M & 5.13\\
X-DenseNetBC-16-70-60 & 77.23\% & 0.65M & 2.57\\
\hline
\end{tabular}}
\caption{We display accuracies, parameters and FLOPs of all the wider and deeper networks on CIFAR-100 listed in increasing compression order. This proves that efficiently designing layers like X-Conv and X-Linear allows us to train wider and deeper networks frugally.}
\label{tab:ultranets}
\end{table}

Table \ref{tab:ultranets} displays accuracies, parameters and FLOPs of all the wider and deeper networks trained on CIFAR-100 dataset as discussed in Section 5.4. They are listed in increasing compression order from DensenetBC (1x) to the highest compressed X-DenseNet. The figure indicates that Expander Graphs modeling can scale up to high compression ratios without drastic drops in accuracies, enabling us to train deeper and wider networks retaining similar FLOPs and parameters effectively. We believe this modeling can open up a interesting exploration of training significantly deeper and wider range of models, unlike the current compression techniques as X-Nets are highly compressed networks since definition. 

\subsection{Experimental Details}
\label{sec:expdetails}
To ensure better reproducibility, we used the same hyper-parameters, models, training schedules and dataset structures from the official \href{https://github.com/pytorch/examples/blob/master/imagenet/main.py}{PyTorch repository} for ResNet and DenseNet-BC ImageNet experiments. Similarly, we followed the \href{https://github.com/marvis/pytorch-mobilenet}{pytorch-mobilenet repository} including all hyperparameters for all the mobilenet experiments. Likewise, we followed the \href{https://github.com/andreasveit/densenet-pytorch/}{densenet-pytorch repository} by Andreas Veit for all DenseNet-BC experiments on CIFAR-10/100 datasets, and the \href{https://github.com/chengyangfu/pytorch-vgg-cifar10}{pytorch-vgg-cifar10} repository by Cheng-Yang Fu for VGG16 experiments on the CIFAR-10 dataset. We deleted a linear layer and remove dropouts from the VGG16 model. There were two differences between our code and the PyTorch ImageNet repository. Our ImageNet training code used a compressed version of Imagenet with images resized to 256x256, we used a batchsize of 128 for all experiments, hence typically results in slightly lower accuracies. Our AlexNet model used BatchNorm layers to stabilize training, with a batchsize of 384.

We trained all the models on a setup consisting of 10 Intel Xeon E5-2640 cores and 2 GeForce GTX 1080 Ti GPUs. All networks are trained from scratch. We modeled connections as  random expanders for all experiments. The X-Conv layers did not have a bias. No dropouts were used. Additional details regarding all models, accuracies, and numbers of parameters and FLOPs is tabulated in the supplementary material for reference due to lack of space.  

\par
All models in the plot were trained from the ImageNet code available on Pytorch Official repository including the original DenseNet and ResNet models. Note that this training code is common for all models in the repository and not fine-tuned to any specific model like ResNet or DenseNet. Hence the accuracies we report is slightly lower that those reported using model-specific training code. However, note that we have used the same code for training both the original models and the expander versions of these models. Hence the comparison is a fair one. Same is true with the  AlexNet model. Note that we use the original AlexNet architecture, and not CaffeNet. 
In contrast, training with fine-tuned code is expected to give a 1-2\% improvement over MobileNets in Table \ref{tab:imagenet}. Training schedules suited to X-Nets, hyper-parameter tuning, Activation layers tailored to X-Nets could further improve the accuracies. Overall, we believe that investigating training methods for X-Nets has a lot of potential for improving their performance.

\else
\fi

\end{document}